\RequirePackage{fix-cm}
\documentclass[11pt]{article}
\usepackage{graphicx}
\usepackage[T1]{fontenc}
\usepackage[latin9]{inputenc}
\usepackage{geometry}
\usepackage{color}
\usepackage{algorithm2e}
\usepackage{amsmath}
\usepackage{multirow}
\usepackage{amsthm}
\usepackage{dsfont}
\usepackage{amssymb}
\usepackage{bbm}
\usepackage{float}
\usepackage{amsthm} 
\usepackage{undertilde}
\usepackage{booktabs}
\usepackage{setspace}
\usepackage{adjustbox}
\usepackage{graphicx}
\usepackage{subcaption}
\usepackage{float}

\usepackage{esint}
\usepackage{hyphenat}
\usepackage[authoryear]{natbib}
\usepackage[unicode=true,
 bookmarks=false,
 breaklinks=true,pdfborder={0 0 0},pdfborderstyle={},backref=section,colorlinks=true]
 {hyperref}
\hypersetup{
 citecolor=blue, linkcolor=blue}

\makeatletter
\usepackage{changebar}
\providecommand{\lyxadded}[3]{}

\renewcommand{\lyxadded}[3]{%
  {\protect\cbstart\color{lyxadded}{}#3\protect\cbend}%
}

\newtheorem{proposition}{Proposition}

\usepackage{verbatim}
\usepackage{array}
\usepackage{tabularx}
\usepackage{ragged2e}
\newcolumntype{P}[1]{>{\RaggedRight\arraybackslash}p{#1}}

\usepackage[labelfont={bf}, 
            textfont={it},
            margin=1cm,
            font=small]{caption}
\usepackage[final]{microtype}
\usepackage{amsthm}
\newtheorem{theorem}{Theorem}

\DeclareMathOperator*{\argmin}{arg\,min}

\usepackage[multiple]{footmisc}

\usepackage{titling}
\usepackage{xpatch}
\makeatletter
  {\begin{titlepage}\def\@thanks{}}%
  {\end{titlepage}}
\xpatchcmd\titlepage{\setcounter{page}\@ne}{}{}{}
\xpatchcmd\endtitlepage{\setcounter{page}\@ne}{}{}{}

\def\independenT#1#2{\mathrel{\rlap{$#1#2$}\mkern2mu{#1#2}}}

\LinesNumbered
\RestyleAlgo{algoruled} 

\makeatother

\newcommand{\R}{\mathbb{R}}
\newcommand{\E}{\mathbb{E}}

\newcommand{\D}{\mathcal{D}}

\newcommand{\prob}{\mathbbm{P}}

\begin{document}
\global\long\def\ddd{,\ldots,}%

\global\long\def\bB{\mathbf{B}}%

\global\long\def\bS{\mathbf{S}}%

\global\long\def\hS{\widehat{\mathbf{S}}}%

\global\long\def\hTheta{\widehat{\boldsymbol{\Theta}}}%

\global\long\def\bTheta{\boldsymbol{\Theta}}%

\global\long\def\bSigma{\boldsymbol{\Sigma}}%

\global\long\def\tr{\mathrm{tr}}%

\global\long\def\bX{\mathbf{X}}%

\global\long\def\bZ{\mathbf{Z}}%

\global\long\def\bx{\mathbf{x}}%

\global\long\def\bz{\mathbf{z}}%

\global\long\def\bs{\mathbf{s}}%

\global\long\def\bm{\mathbf{m}}%

\global\long\def\R{\mathbb{R}}%

\global\long\def\E{\mathbb{E}}%

\global\long\def\sn{\sum_{i=1}^{N}}%

\global\long\def\ind{\mathbb{I}}%

\global\long\def\diag{\operatorname{diag}}%

\global\long\def\sgn{\operatorname{sgn}}%

\global\long\def\prox{\operatorname{prox}}%

\global\long\def\Proj{\operatorname{Proj}}%

\global\long\def\argmin{\operatorname*{arg\,min}}%

\title{Isotonic Conformal Prediction}
\author{
Daniel Bensimon \thanks{Department of Mathematics and Statistics, McGill
University; Mila},
Sean Xiang Yu \thanks{Eli Lilly and Company},
Eric D. Kolaczyk \thanks{Department of Mathematics and Statistics, McGill University; Mila }
Archer Y. Yang \thanks{Corresponding author, Department of Mathematics and Statistics, McGill University; Mila (archer.yang.yi@gmail.com)},
}
\maketitle
\begin{abstract}
A point prediction that is well calibrated on average can still be
systematically biased conditional on its own value, undermining its use in
downstream decision-making. We consider two objectives for reliable
uncertainty quantification: self-calibration, requiring a point prediction to
be unbiased conditional on its own value, and prediction-conditional
validity, requiring a prediction interval to attain nominal coverage
conditional on the prediction. Self-Calibrating Conformal Prediction (SC-CP)
attains both objectives exactly in finite samples, but requires refitting its
calibrator for every candidate outcome, which is computationally prohibitive
for continuous outcomes. We propose Isotonic Conformal Prediction (ICP), a
framework that decouples calibration from prediction-set construction by
fitting a single isotonic recalibration map and constructing prediction intervals
within strata of similar recalibrated predictions. Within this framework we
develop two procedures. Split Isotonic Conformal Prediction (SICP) attains
prediction-conditional validity in finite samples and self-calibration
asymptotically, at the computational cost of split conformal prediction.
Transductive Isotonic Conformal Prediction (TICP) attains both objectives
exactly in finite samples through a per-test-point inner loop that avoids
refitting the isotonic calibrator. On synthetic heteroscedastic regression
problems and a real-world healthcare-utilization dataset, both procedures
match the coverage of SC-CP at substantially lower computational cost.

\end{abstract}

\noindent \textbf{Keywords:}

\section{Introduction}
Machine learning models are increasingly deployed in high-stakes domains such
as drug discovery \citep{guFacilitatingStructurebasedDrug2026b,
vamathevanApplicationsMachineLearning2019}, clinical risk prediction
\citep{topolHighperformanceMedicineConvergence2019}, and the deployment of
large language models \citep{singhalLargeLanguageModels2023}. In such settings
the numerical value of a prediction, not merely its rank or class, drives the
downstream decision-making task, and a model that is systematically over or
under confident can produce costly errors even when its marginal accuracy is
good. This is the concern addressed by \emph{calibration}
\citep{zadroznyObtainingCalibratedProbability,
gneitingProbabilisticForecastsCalibration2007}. Given a covariate-outcome pair
$(X,Y)$, a predictor $\hat f$ is self-calibrated
\citep{guptaDistributionfreeBinaryClassification2022} if
\[
    \mathbb{E}\!\left[Y \mid \hat f(X)\right] = \hat f(X)
    \qquad \text{a.s.}
\]
so that among the inputs receiving a given score the outcomes average to that
score. Such a predictor is robust against systematic over- or underestimation
of the outcome at the extremes of its predicted values, and has the further
property that the best prediction of the outcome conditional on $\hat f(X)$ is
$\hat f(X)$ itself, which facilitates transparent decision-making.

Calibration, however, only pins down the center of the outcome distribution; it
says nothing about how widely outcomes vary around it. To act on a prediction, a
decision-maker also needs to know how much to trust it, which requires
quantifying the uncertainty that remains once the center is calibrated. The two
steps are not interchangeable: if the underlying point prediction is
miscalibrated, interval width conflates two distinct sources of uncertainty,
the intrinsic randomness in the outcome and the systematic bias in the model.
Calibrating first isolates the former, so that the resulting prediction
interval quantifies genuine uncertainty around a prediction that is already
calibrated.

We therefore target two objectives simultaneously in our work:
\begin{align}
\text{(i)} & \qquad \text{Self-Calibrated Prediciton:} \quad \mathbb{E}\!\left[\,Y_{n+1}\mid \hat f(X_{n+1})\,\right] = \hat f(X_{n+1}) \quad \text{a.s.}\\
\text{(ii)} &\qquad \text{Prediction-Conditional Validity:} \quad \prob\!\left(Y_{n+1} \in \widehat{C}(X_{n+1}) \,\middle|\, \hat f(X_{n+1})\right) \ge 1-\alpha.
\end{align} 

Objective~(i) requires the point prediction to be unbiased conditional on its own
value; objective~(ii) requires the construction of  prediction intervals that attain the target coverage level
conditional on the prediction. Together, these conditions give a
two-stage form of reliable uncertainty quantification: first make the prediction
itself calibrated, then construct an uncertainty set that is valid at the level
of the prediction used for decision-making. To achieve objective~(ii), we turn to
conformal prediction \citep{vovkAlgorithmicLearningRandom2005}, a model-agnostic
framework for constructing prediction intervals with finite-sample coverage
guarantees under minimal assumptions. The challenge is coupling it to
objective~(i) without sacrificing either guarantee. However, the coverage guarantees of standard conformal prediction hold only marginally, that
is, on average over the population. As a result, the prediction
intervals can systematically miscover for important subpopulations or regions
of the feature space, which is precisely what matters in high-stakes
applications. Exact conditional coverage is known to be impossible without
further distributional assumptions
\citep{leiDistributionfreePredictionBands2014}, which has motivated a line of
work that relaxes exact conditional validity and instead targets weaker but
attainable notions of conditional or local coverage
\citep{horeConformalPredictionLocal2024,gibbsConformalPredictionConditional2024}. 

None of these methods, however, achieve prediction-conditional coverage. 
Self-Calibrating Conformal
Prediction (SC-CP) \citep{laanSelfCalibratingConformalPrediction2024a}
formalized this dual objective and attains both objectives exactly in finite samples
through a Venn-Abers multi-prediction \citep{vovkVennAbersPredictors2014a}. But achieving these guarantees comes at a steep computational cost, since, SC-CP requires refitting the calibrator for every candidate label. To address this computational cost, we introduce \emph{Isotonic Conformal
Prediction} (ICP), a framework that decouples calibration from prediction interval
construction. Rather than refitting the calibrator for every candidate label as
SC-CP does, ICP fits a single isotonic recalibration map from the calibration
data and then constructs prediction intervals within strata of similar calibrated
predictions, a computationally efficient within-stratum computation that is the source of the
speed-up over SC-CP. Under this umbrella we propose two procedures that share
this single calibration step and differ only in the strength of the
self-calibration guarantee they attain.

\paragraph{Contributions.} A summary of the contributions from our \emph{Isotonic Conformal Prediction} methods can be found below.

\begin{itemize}
    \item \textbf{Split Isotonic Conformal Prediction (SICP).}
    A two-stage procedure that calibrates the model's predictions by isotonic
    regression on one subset of the calibration data, then applies a conformal
    procedure on the remaining subset within strata of similar calibrated predictions.
    SICP attains objective~(ii) in finite samples
    (Theorem~\ref{thm:split_coverage}); because coverage holds conditional on the
    predicted value, the interval width adapts to the heteroscedasticity captured by
    the model, widening where higher predictions carry greater outcome variance. Its
    isotonic fit is exactly self-calibrated on the calibration split
    (Proposition~\ref{prop:isotonic_empirical_calibration}) and attains
    self-calibration asymptotically under standard regularity
    conditions (Theorem~\ref{thm:asymptotic_calibration_split}), recovering
    objective~(i). Calibration requires only a single isotonic fit, so SICP is
    essentially as cheap as split conformal prediction.

    \item \textbf{Transductive Isotonic Conformal Prediction (TICP).}
    A complementary procedure that attains both objectives exactly in finite samples
    through a per-test-point inner loop, without refitting the isotonic calibrator for
    each candidate label: exact self-calibration
    (Theorem~\ref{thm:transductive_exact_self_calibration}) and prediction-conditional
    coverage (Theorem~\ref{thm:transductive_coverage}). Exact self-calibration removes
    systematic bias from the point prediction, if the model reports an expected
    healthcare utilization score of ten, the true average outcome among individuals
    assigned that score is also ten, so the interval width reflects the outcome's
    intrinsic (aleatoric) variability rather than the model's bias. The inner loop
    avoids the repeated isotonic fits that dominate the runtime of SC-CP, recovering
    its exact guarantees at substantially lower cost.

    \item \textbf{Empirical evaluation.}
    We implement both procedures and evaluate them on synthetic heteroscedastic
    regression problems and real-world datasets, including a healthcare-utilization
    dataset. Across these settings our methods match the coverage of SC-CP while
    running substantially faster, making them practical in time-sensitive applications.
\end{itemize}

\section{Background and Related Works}

\subsection{Conformal Prediction}
We consider a standard regression setting where the input $X \in \mathcal{X} \subseteq \mathbb{R}^d$ represents the predictors and the output $Y \in \mathcal{Y} \subseteq \mathbb{R}$ is the response variable. 
Suppose we observe training data $\{(X_i, Y_i)\}_{i=1}^{n}$, which are independent and identically distributed (i.i.d.) draws from a joint distribution $P_{\mathcal{X} \times \mathcal{Y}}$  (this can be extended to the exchangeable case). For a target miscoverage level $\alpha \in (0,1)$, our goal is to construct a prediction interval $\widehat{C}(X) \subseteq \mathcal{Y}$, based on the training data, that contains the unknown response $Y_{n+1}$ of a new test point $X_{n+1}$ with probability at least $1-\alpha$. Formally, we require
\[
\mathbb{P}\bigl(Y_{n+1} \in \widehat{C}(X_{n+1})\bigr) \geq 1-\alpha,
\]
where the probability is taken over the randomness in both the training data and the new test pair $(X_{n+1}, Y_{n+1}) \sim P_{\mathcal{X} \times \mathcal{Y}}$.

The original framework of conformal prediction, often referred to as \emph{full} or \emph{transductive conformal prediction}  \citep{vovkAlgorithmicLearningRandom2005}, achieves this guarantee by treating the data symmetrically via an augmented training process. Fix a test feature $X_{n+1} \in \mathcal{X}$. For any candidate query value $y \in \mathcal{Y}$, we form an augmented training set
\[
\mathcal{D}^{(X_{n+1}, y)} = \{(X_1, Y_1), \dots, (X_n, Y_n), (X_{n+1}, y)\}
\]
and train a predictor $\hat{f}^{(X_{n+1}, y)}$ on this set. The superscript in $\hat{f}^{(X_{n+1}, y)}$ indicates that the model is trained on the augmented data. We then compute non-conformity scores (e.g. absolute residuals) for all points in the augmented set:
\[
V_i^{(X_{n+1},y)} = |Y_i - \hat{f}^{(X_{n+1},y)}(X_i)| \quad \text{for } i=1,\dots,n, \quad \text{and} \quad V_{n+1}^{(X_{n+1},y)} = |y - \hat{f}^{(X_{n+1},y)}(X_{n+1})|.
\]
The conformal prediction interval is defined as the set of all candidates $y$ whose score $V_{n+1}^{(X_{n+1},y)}$ is among the smallest $\lceil(1-\alpha)(n+1)\rceil$ scores in the augmented list:
\[
\widehat{C}(X_{n+1}) = \left\{ y \in \mathcal{Y} : V_{n+1}^{(X_{n+1},y)} \le \lceil(1-\alpha)(n+1)\rceil \text{smallest of } V_1^{(X_{n+1},y)}, \dots, V_{n+1}^{(X_{n+1},y)}  \right\}.
\]
While statistically rigorous, Full CP is computationally prohibitive for complex models, as it requires refitting the model for every candidate value $y$ on the grid of the output space.

To address this computational bottleneck, \emph{split conformal prediction} (also known as inductive conformal prediction) \citep{papadopoulosInductiveConfidenceMachines2002, leiDistributionFreePredictiveInference2017a} decouples model training from calibration. Concretely, we first divide the training set into two disjoint sets: a \emph{proper training set} $\mathcal{D}_1$ and a \emph{calibration set} $\mathcal{D}_2$, such that $\mathcal{D}_1 \cap \mathcal{D}_2 = \emptyset$. Let $n_1 = |\mathcal{D}_1|$ and $n_2 = |\mathcal{D}_2|$. We fit a predictive model $\hat{f}$ on  $\mathcal{D}_1$. We then introduce calibration scores, e.g. the absolute residuals, defined as
$V_i = \lvert Y_i - \hat{f}(X_i) \rvert$ for $(X_i, Y_i) \in \mathcal{D}_2$,
and the corresponding test residual $V_{n+1}^{(X_{n+1},y)} = \lvert y - \hat{f}(X_{n+1}) \rvert$. We compute the conformal quantile $\hat{q}$ as the $\lceil(1-\alpha)(n_2+1)\rceil$-th smallest value among the scores $\{V_i\}_{i \in \mathcal{D}_2}$. The resulting prediction interval for a new input $X_{n+1}$ is:
\[
\widehat{C}(X_{n+1}) = \left\{ y \in \mathcal{Y} : V_{n+1}^{(X_{n+1},y)} \le \hat{q}  \right\} = [\hat{f}(X_{n+1}) - \hat{q}, \hat{f}(X_{n+1}) + \hat{q} ].
\]
This interval satisfies the finite-sample validity guarantee with a computational cost dominated by a single training run.

\subsection{Conditional Validity and Mondrian Conformal Prediction}

While marginal coverage guarantees are reassuring, they can mask systematic coverage issues in certain regions of the feature space, for example, a method may under-cover for particular subpopulations even while achieving the desired coverage on average over the full population.
Ideally, one would seek \emph{conditional coverage} guarantees that hold for every input $x$, i.e., $\mathbb{P}(Y \in \widehat{C}(X) \mid X=x) \geq 1-\alpha$. However, achieving non-trivial exact conditional coverage is theoretically impossible without strong distributional assumptions \citep{leiDistributionfreePredictionBands2014, foygel2021limits}. This impossibility result has motivated a significant body of work on defining practical relaxations to exact conditional coverage.

Several lines of work have emerged to deal with this limitation. Weighted conformal prediction \citep{tibshirani2019conformal}, \citep{barber2023conformal} addresses validity under distributional shifts by reweighting calibration scores according to likelihood ratios, implicitly adapting coverage to regions of the feature space that are more relevant under the target distributions. Localized conformal prediction \citep{horeConformalPredictionLocal2024} instead targets validity over pre-defined groups or local neighborhoods of the test point. A different relaxation is proposed by
\citet{gibbsConformalPredictionConditional2024}, who redefine
conditional coverage as coverage uniform over a class of covariate
shifts and obtain exact finite-sample guarantees within that class.

A rigourous framework for exact \emph{group-conditional} validity is \emph{Mondrian Conformal Prediction} \citep{vovk2003mondrian,vovkAlgorithmicLearningRandom2005}, which is particularly relevant to our approach. In the Mondrian framework, the feature space is partitioned into non-overlapping taxonomic categories $K_1, \dots, K_M$ and the conformal procedure is run separately within each category. For a test point $X_{n+1}$ falling into category $K_j$, the conformal quantile is calculated using only the calibration scores $V_i$ with points $X_i \in K_j$. This leads to coverage conditional on category membership, \[
\mathbb{P}\!\left(Y_{n+1} \in \widehat{C}(X_{n+1}) \,\middle|\,
X_{n+1} \in K_j\right) \geq 1 - \alpha.
\]

\subsection{Review of Self-Calibrating Conformal Prediction}

The methods proposed in this paper are related to Self-Calibrating Conformal
Prediction (SC-CP) \citep{laanSelfCalibratingConformalPrediction2024a}, which targets
the same dual objective. We review the method here. Calibration of model predictions
is widely recognized as essential for reliable and trustworthy decision-making
\citep{lichtensteinCalibrationProbabilitiesState1977,zadroznyObtainingCalibratedProbability,guoCalibrationModernNeural2017}.
Calibration also improves interval efficiency: when a prediction interval
$\widehat{C}(X_{n+1})$ is centered on an unbiased point prediction, its width reflects
outcome variability rather than systematic prediction bias
\citep{laanSelfCalibratingConformalPrediction2024a}. SC-CP attains both properties at
once, using a Venn-Abers multi-prediction SC-CP returns a finite sample self-calibrated point predictor together with prediction intervals that,
in finite samples, are valid conditional on it.

At the core of the method, the base model $f$ is calibrated via isotonic regression
\citep{barlowIsotonicRegressionProblem1972},
\[
\hat\theta \in
\arg\min_{\theta \in \Theta}
\sum_{i=1}^n \bigl(Y_i - \theta(f(X_i))\bigr)^2,
\qquad
\hat f = \hat\theta \circ f,
\]
where $\Theta$ is the class of monotone non-decreasing functions. Calibration alone is
not enough to obtain prediction intervals with prediction-conditional validity. Since the
test outcome $Y_{n+1}$ is unknown, SC-CP builds the prediction intervals transductively:
for each candidate outcome $y \in \mathcal{Y}$ it forms the augmented dataset
\[
\D^{(X_{n+1},y)} := \D \cup \{(X_{n+1},y)\}.
\]
On each augmented dataset the isotonic calibrator is refit,
\[
\hat\theta^{(y)} \in
\arg\min_{\theta \in \Theta}
\sum_{(X_i,Y_i) \in \D^{(X_{n+1},y)}}
\bigl(Y_i - \theta(f(X_i))\bigr)^2,
\qquad
\hat f^{(y)} := \hat\theta^{(y)} \circ f,
\]
yielding the conformity scores
\[
V_i^{(y)} := \bigl|Y_i - \hat f^{(y)}(X_i)\bigr|,
\quad i = 1, \dots, n,
\qquad
V_{n+1}^{(y)} := \bigl|y - \hat f^{(y)}(X_{n+1})\bigr|.
\]
Rather than taking a global quantile of these scores, SC-CP leverages the
step-function nature of isotonic regression and computes the quantile using only the
points whose calibrated prediction coincides with that of the test point, by
minimizing the pinball loss
\[
\hat q^{(y)}(X_{n+1})
= \arg\min_{q \in \mathbb{R}}
\sum_{i=1}^n
\mathbbm{1}\!\left\{\hat f^{(y)}(X_i) = \hat f^{(y)}(X_{n+1})\right\}
\ell_\alpha\bigl(q, V_i^{(y)}\bigr)
+ \ell_\alpha\bigl(q, V_{n+1}^{(y)}\bigr),
\]
where
$\ell_\alpha(q, V) = \mathbbm{1}(V < q)\,\alpha(q - V) + \mathbbm{1}(V \geq q)\,(1 - \alpha)(V - q)$
is the pinball loss at level $\alpha \in (0,1)$, whose minimizer over
$\sum_{i=1}^n \ell_\alpha(q, V_i)$ recovers the $(1-\alpha)$ empirical quantile of
$\{V_i\}_{i=1}^n$ \citep{koenkerRegressionQuantiles1978a}. The prediction interval then
consists of every candidate $y$ whose score falls below this localized quantile,
\[
\widehat{C}(X_{n+1})
:= \bigl\{ y \in \mathcal{Y} :
V_{n+1}^{(y)} \leq \hat q^{(y)}(X_{n+1}) \bigr\}.
\]
This localization is precisely what upgrades marginal validity to
prediction-conditional validity.
\citet{laanSelfCalibratingConformalPrediction2024a} show that, in finite samples, the
calibrator $\hat f^{(Y_{n+1})}$ achieves exact self-calibration (objective~(i)) and the
resulting prediction intervals attain prediction-conditional coverage (objective~(ii)).

The price of these guarantees is computational. For a continuous outcome space,
$\mathcal{Y}$ must be discretized into a grid of size $K$, and SC-CP performs a full
isotonic calibration on the augmented dataset for each of the $K$ candidates.
One-dimensional isotonic regression costs $O(n)$ when the model scores are already
sorted and $O(n\log n)$ otherwise, so the direct implementation costs $O(Kn)$ or
$O(Kn\log n)$. This factor of $K$ is the principal bottleneck that the methods of this
paper remove.

\section{Isotonic Conformal Prediction}
In this section we present two procedures that fall under that fall under the umbrella of \emph{Isotonic Conformal Prediction}. Both methods rest on a structural property of isotonic regression. Calibrating the base model $f$ by regressing the outcome on $f(x)$ under a monotonicity constraint yields a calibrator, $\hat\theta$, that is a nondecreasing step function. The calibrated predictor $\hat f = \hat\theta \circ f$
therefore takes only finitely many distinct values and partitions the feature space
into a finite collection of \emph{level sets}. Conditioning on the calibrated prediction is thus equivalent to conditioning on membership in one of these level sets.

This is what makes prediction-conditional coverage attainable from a single calibration fit. We treat the level sets as \emph{categories} of a Mondrian conformal prediction framework, with the distinction that the categories are in a sense learned from the data. Since $\hat f$ takes finitely many values, each level set is populated by many calibration points, so a conformal quantile can be formed within the test point's level set. This allows for both methods to achieve finite sample prediction-conditional coverage (objective~(ii)). 

The two procedures we present differ only in  how the within-level-set computation is carried out, which determines the strength of their self calibration guarantee. Split
Isotonic CP (Section~\ref{sec:split_iso_cp}) holds the calibrated predictor and its
residuals fixed; it is the computationally cheaper method and attains empirical self-calibration
on the calibration data, together with asymptotic population-level self-calibration.
Transductive Isotonic CP (Section~\ref{sec:trans_iso_cp}) instead applies a local
update within the bin and thereby recovers exact finite-sample self-calibration (objective~(i)),
at a modest additional cost. While their self calibration guarantees are different, the following methods both achieve finite sample prediction-conditional coverage.

\subsection{Split Isotonic Conformal Prediction}
\label{sec:split_iso_cp}
Motivated by the efficiency of split conformal prediction
\citep{leiDistributionfreePredictionBands2014}, we propose Split Isotonic Conformal
Prediction (SICP). SICP yields prediction-conditional coverage from a single
calibration fit; in exchange for this computational simplicity, it relaxes the exact
finite-sample form of self-calibration (objective~(i)) to empirical and asymptotic
versions.

Starting with a trained base model $f$ and a calibration dataset
$\mathcal{D} = \{(X_i, Y_i)\}_{i=1}^n$, we randomly partition the calibration indices
into two disjoint subsets $\mathcal{I}_1, \mathcal{I}_2$ with
$\mathcal{I}_1 \cup \mathcal{I}_2 = \{1, \dots, n\}$ and
$\mathcal{I}_1 \cap \mathcal{I}_2 = \emptyset$. Let
$\mathcal{D}_1 = \{(X_i, Y_i)\}_{i \in \mathcal{I}_1}$ denote the \emph{calibration
training set} and $\mathcal{D}_2 = \{(X_i, Y_i)\}_{i \in \mathcal{I}_2}$ the
\emph{score computation set}, with $n_1 = |\mathcal{D}_1|$ and $n_2 = |\mathcal{D}_2|$.

We first fit the isotonic calibrator $\hat\theta$ on $\mathcal{D}_1$, yielding the
calibrated predictor $\hat f := \hat\theta \circ f$,
\[
    \hat\theta \in
    \arg\min_{\theta \in \Theta}
    \sum_{i \in \mathcal{I}_1}
    \bigl(Y_i - \theta(f(X_i))\bigr)^2,
    \qquad
    \hat f = \hat\theta \circ f,
\]
where $\Theta$ is the class of monotone non-decreasing functions. Given a test point
$X_{n+1}$, we then restrict attention to the calibration points in $\mathcal{D}_2$ that
share its calibrated prediction,
\[
    \mathcal{I}(X_{n+1}) =
    \{ i \in \mathcal{I}_2 : \hat f(X_i) = \hat f(X_{n+1}) \}.
\]
Since $\hat f$ takes only finitely many distinct values, this membership check is
well-defined and can be implemented as a level-set lookup rather than a numerical
equality test. Finally, we conformalize within this level set by computing the absolute
residuals
\[
    \mathcal{R} =
    \bigl\{ |Y_i - \hat f(X_i)| : i \in \mathcal{I}(X_{n+1}) \bigr\},
\]
and forming the prediction interval
\[
    \widehat{C}(X_{n+1}) =
    \bigl\{ y \in \mathcal{Y} :
    |y - \hat f(X_{n+1})| \leq \hat q(X_{n+1}) \bigr\},
\]
where $\hat q(X_{n+1}) = \mathrm{Quantile}\bigl(1 - \alpha,\, \mathcal{R} \cup \{\infty\}\bigr)$
is the $\lceil (1-\alpha)(|\mathcal{R}| + 1) \rceil$-th smallest value of
$\mathcal{R} \cup \{\infty\}$.

\begin{algorithm}[h]
\caption{Split Isotonic Conformal Prediction}
\label{alg:split_iso_cp}
\KwIn{Data $\mathcal{D}$, base model $f$, $\alpha$}
Split $\mathcal{D}$ into $\mathcal{D}_1, \mathcal{D}_2$\;
Fit isotonic $\hat{\theta}$ on $\mathcal{D}_1$; $\hat{f} \leftarrow \hat{\theta} \circ f$\;
\For{a test point $X_{n+1}$}{
    Identify group indices $\mathcal{I}(X_{n+1}) = \{i \in \mathcal{I}_2 : \hat{f}(X_i) = \hat{f}(X_{n+1})\}$\;
    Compute scores $\mathcal{R} = \{ |Y_i - \hat{f}(X_i)| : i \in \mathcal{I}(X_{n+1}) \}$\;
    $\hat{q}(X_{n+1}) \leftarrow \text{Quantile}(1-\alpha, \mathcal{R} \cup \{\infty\})$\;
    \Return $\widehat{C}(X_{n+1}) = \left\{ y \in \mathcal{Y} : |y - \hat{f}(X_{n+1})| \le \hat{q}(X_{n+1}) \right\}$;
}
\end{algorithm}

This construction attains prediction-conditional coverage by forming the quantile from
only those calibration points sharing the test point's calibrated prediction, while
requiring a single isotonic fit; we formalize the guarantee in
Theorem~\ref{thm:split_coverage}. If $\mathcal{I}(X_{n+1})$ is empty, we fall back to a
marginal split-conformal interval centered at $\hat f(X_{n+1})$ using the residuals on
all of $\mathcal{D}_2$. This case is rare in practice: for large $n_2$, each level set
pools many calibration points wherever the test prediction is supported by the training
data.

The cost of Split Isotonic CP separates into a one-time calibration step and a cheap
per-test-point step. Let $m = |\mathcal{I}(X_{n+1})|$ denote the number of calibration
points in the test point's level set. Fitting the one-dimensional isotonic calibrator on
$\mathcal{D}_1$ is done once and costs $O(n_1\log n_1)$, or $O(n_1)$ when the model
scores are already sorted. Each subsequent test point then requires only the $m$
within-level-set residuals and a single empirical quantile, costing $O(m\log m)$ with a
sorting-based quantile. The per-test-point cost therefore scales with the size of the
local level set rather than the full calibration set, and involves no refitting of the
calibrator and no search over candidate labels.

Since each prediction is cheap, SICP is well suited to latency-sensitive
applications, where decisions must be made quickly yet still demand reliable
uncertainty quantification, such as autonomous driving or algorithmic trading. The
bulk of the computation, the isotonic fit, is paid for once offline; at deployment each
prediction requires only a within-level-set quantile, so prediction-conditional
intervals can be produced with negligible overhead.

\subsubsection{Theoretical Results for Split Isotonic Conformal Prediction}
\label{sec:theory_split}
\begin{theorem}[Prediction-conditional coverage for Split Isotonic CP]
\label{thm:split_coverage}
Assume that $\{(X_i,Y_i)\}_{i=1}^{n+1}$ are exchangeable. Let
$\widehat C(X_{n+1})$ be the prediction interval constructed by
Algorithm~\ref{alg:split_iso_cp}. Then
\[
\mathbb P\left(
Y_{n+1}\in \widehat C(X_{n+1})
\,\middle|\,
\hat f(X_{n+1})
\right)
\ge 1-\alpha.
\]
\end{theorem}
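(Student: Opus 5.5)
Since $\hat\theta$ is fit only on $\mathcal D_1$, the plan is to condition on $\mathcal D_1$ throughout and reduce the claim to Mondrian split conformal prediction with a fixed, data-independent taxonomy. Given $\mathcal D_1$, the map $\hat f=\hat\theta\circ f$ is a fixed nondecreasing step function of $f$. It takes finitely many values $c_1<\dots<c_M$, which partition $\mathcal X$ into level sets $K_j=\{x:\hat f(x)=c_j\}$. The points $\{(X_i,Y_i)\}_{i\in\mathcal I_2}\cup\{(X_{n+1},Y_{n+1})\}$ are independent of $\mathcal D_1$ by the random split, so they remain exchangeable conditional on $\mathcal D_1$. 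The scores $V_i=|Y_i-\hat f(X_i)|$ are a fixed function applied to each of these points. Conditioning on $\hat f(X_{n+1})$ is the same as conditioning on the event $\{X_{n+1}\in K_j\}$ for the realized $j$.

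The core step is a within-stratum exchangeability argument. Fix $j$ and additionally condition on the multiset of pairs $\{(\hat f(X_i),V_i)\}$ over $\mathcal I_2\cup\{n+1\}$. Let $S_j=\{i\in\mathcal I_2\cup\{n+1\}:\hat f(X_i)=c_j\}$ and write $m+1=|S_j|$. On the event $n+1\in S_j$, exchangeability implies that, conditional on $\mathcal D_1$, on this multiset and on $n+1\in S_j$, the test score $V_{n+1}$ is uniformly distributed over the positions of the $m+1$ scores $\{V_i\}_{i\in S_j}$ (ties broken by the usual random-index argument). On this event the set $\mathcal R$ is exactly $\{V_i:i\in S_j\setminus\{n+1\}\}$, and $|\mathcal R|=m$. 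The standard quantile lemma then gives that the probability that $V_{n+1}\le$ the $\lceil(1-\alpha)(m+1)\rceil$-th smallest element of $\mathcal R\cup\{\infty\}$ is at least $\lceil(1-\alpha)(m+1)\rceil/(m+1)\ge1-\alpha$. Replacing $V_{n+1}$ by $\infty$ on the right-hand side is the usual trick that makes this deterministic comparison valid. Finally, $Y_{n+1}\in\widehat C(X_{n+1})$ if and only if $V_{n+1}\le\hat q(X_{n+1})$.

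To finish, I integrate out the conditioning in order: first over the multiset, then over $m$ (this covers $m=0$, where $\hat q=\infty$), then over $\mathcal D_1$. The tower property then yields $\mathbb P(Y_{n+1}\in\widehat C\mid \hat f(X_{n+1})=c_j,\mathcal D_1)\ge1-\alpha$ for each $j$ with positive probability, and hence the same bound conditional on $\hat f(X_{n+1})$ alone.

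The main obstacle is making the conditioning rigorous. The conditioning event $\{\hat f(X_{n+1})=c\}$ depends on $\mathcal D_1$, so $\hat f(X_{n+1})$ should be treated as the pair $(\mathcal D_1,\text{level index})$, or it should be argued that the bound holds on each fine cell and then averaged. A second delicate point is that the paper's fallback rule for an empty $\mathcal I(X_{n+1})$ conflicts with the $\infty$ convention. I would analyze the algorithm as stated, with $\mathcal R\cup\{\infty\}$ giving $\hat q=\infty$ when $m=0$, and note that a marginal fallback would require a separate remark.
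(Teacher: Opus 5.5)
Your proposal is correct and takes essentially the paper's route: condition on $\mathcal D_1$ so that $\hat f$ is fixed, get exchangeability of the test point with the calibration points in its level set, apply the standard conformal quantile lemma, and integrate out with the tower property. The only difference is the conditioning device: the paper conditions on the index set $\mathcal S$ and $\hat f(X_{n+1})$ and extends permutations of $\mathcal S$ by the identity, while you condition on the bag of (level, score) pairs, and the two are equivalent. One small correction: under mere exchangeability the points of $\mathcal D_2$ and the test pair need not be independent of $\mathcal D_1$, but they are still exchangeable conditionally on $\mathcal D_1$ because permutations fixing $\mathcal I_1$ preserve the joint law, and that is all your argument uses. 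Your handling of $m=0$ via $\mathcal R\cup\{\infty\}$ and your caveat about the marginal fallback are both sound.
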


The proof of Theorem~\ref{thm:split_coverage} is given in
Appendix~\ref{app:proof_split_coverage}. This theorem gives the main
finite-sample validity guarantee for Split Isotonic CP: coverage holds conditional on
the calibrated prediction value.

We now turn to objective~(i). The isotonic fit satisfies an exact calibration identity,
but only on the data used to produce it; we record this first, then upgrade it to a
population statement.

\begin{proposition}[Empirical self-calibration of the isotonic fit]
\label{prop:isotonic_empirical_calibration}
Let $\mathcal D_1=\{(X_i,Y_i)\}_{i\in I_1}$ be the data used to fit the
isotonic calibrator, and define
\[
    \hat f = \hat\theta \circ f,
    \qquad
    \hat\theta \in 
    \arg\min_{\theta\in \Theta}
    \sum_{i\in I_1} \bigl(Y_i-\theta(f(X_i))\bigr)^2,
\]
where $\Theta$ denotes the class of nondecreasing functions. Let
\[
    \mathcal V = \{v_1,\ldots,v_K\}
\]
be the distinct values taken by $\hat f(X_i)$ on $\mathcal D_1$, and
define the corresponding level sets
\[
    \mathcal B_k
    =
    \{i\in I_1:\hat f(X_i)=v_k\}.
\]
Then, for every $k=1,\ldots,K$,
\[
    \frac{1}{|\mathcal B_k|}
    \sum_{i\in \mathcal B_k} Y_i
    =
    v_k.
\]
\end{proposition}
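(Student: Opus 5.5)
The plan is a first-order optimality argument applied one block at a time. Start by reducing the infinite-dimensional problem over $\Theta$ to a finite-dimensional one. Write $s_i = f(X_i)$ for $i\in I_1$. The objective depends on $\theta$ only through the values $\theta(s_i)$, so the problem becomes minimizing
\[
F(\mathbf m)=\sum_{i\in I_1}(Y_i-m_i)^2
\]
over vectors $\mathbf m$ that satisfy $m_i\le m_j$ whenever $s_i<s_j$, and $m_i=m_j$ whenever $s_i=s_j$. The fitted vector $\hat m_i=\hat f(X_i)$ is a minimizer of this problem. The level sets $\mathcal B_k$ are exactly the maximal groups of indices on which $\hat m$ is constant. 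Because $\hat m$ is nondecreasing in $s$, each $\mathcal B_k$ corresponds to a contiguous range of score values. The values are ordered $v_1<\cdots<v_K$, and these inequalities are strict.

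The key step is a perturbation of a single block. Fix $k$ and consider $\mathbf m^{(t)}=\hat m+t\,\mathbf 1_{\mathcal B_k}$, which shifts every fitted value in block $k$ by the same amount $t$. Equal scores stay equal, since a tie group lies entirely within one block. Within the block the order is preserved. Across blocks, the constraints are $v_{k-1}\le v_k+t\le v_{k+1}$, which hold for all $|t|<\delta$, where $\delta=\min(v_k-v_{k-1},\,v_{k+1}-v_k)>0$. This gap is strictly positive precisely because the $v_k$ are distinct, and this is the step I will be careful to state explicitly.

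So $\mathbf m^{(t)}$ is feasible for all small $|t|$. This means we can extend to an actual $\theta\in\Theta$, for example by a piecewise-constant nondecreasing interpolation. Then $g(t)=F(\mathbf m^{(t)})$ is a smooth convex quadratic in $t$ that is minimized at the interior point $t=0$. Hence
\[
0=g'(0)=-2\sum_{i\in\mathcal B_k}(Y_i-v_k),
\]
which rearranges to $\frac{1}{|\mathcal B_k|}\sum_{i\in\mathcal B_k}Y_i=v_k$.

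The main obstacle is bookkeeping rather than anything deep. I need to confirm that block-wise shifts stay within $\Theta$, in particular when there are ties in $f(X_i)$, and that the argument does not depend on which minimizer $\hat\theta$ is chosen. The second point is harmless, because the argument applies to any minimizer. The boundary blocks $k=1$ and $k=K$ only impose a one-sided constraint, so they are easier still. No earlier result of the paper is needed; the proposition follows directly from the first-order optimality of the isotonic least-squares fit, and is the well-known pool-adjacent-violators block-mean property.
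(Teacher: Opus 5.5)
Your proof is correct and follows the same route as the paper: both reduce the claim to the first-order condition for the common fitted value $v_k$ on $\mathcal B_k$, whose solution is the block mean. Your block-shift argument, which shows $\hat m + t\,\mathbf 1_{\mathcal B_k}$ stays feasible for $|t|<\delta$ because the distinct $v_k$ are strictly separated, supplies the justification the paper only asserts when it says the block value ``must minimize the local squared-error loss''.
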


The proof of Proposition~\ref{prop:isotonic_empirical_calibration} is given
in Appendix~\ref{app:split_empirical_calibration}. This is an
empirical calibration statement on $\mathcal D_1$; it is not a finite-sample
population calibration guarantee.

Passing from the empirical identity to a population guarantee requires the isotonic fit
to converge to the true calibration curve. The next two assumptions are what make that
convergence uniform, and they are used only for Theorem~\ref{thm:asymptotic_calibration_split}.

\noindent \textbf{Assumption 1.} \textit{The observations $(X_i, Y_i)$ are i.i.d. draws from a distribution $P$ with $\mathbb{E}[Y^2] < \infty$. The support $\mathcal{S}$ of the base predictions $S = f(X)$ is a compact interval in $\mathbb{R}$.}

\noindent \textbf{Assumption 2.} \textit{The true calibration function, defined as $\theta_0(s) = \mathbb{E}[Y \mid f(X)=s]$, is continuous and strictly monotonically increasing on $\mathcal{S}$.}

\begin{theorem}[Asymptotic Population Self-Calibration]
\label{thm:asymptotic_calibration_split}
Suppose Assumptions 1 and 2 hold. Let $(X_{n+1}, Y_{n+1}) \sim P$ be a new test pair independent of $\mathcal{D}_1$. As $n_1 \to \infty$, the calibrated predictor $\hat{f}_{n_1}$ asymptotically achieves exact population-level self-calibration. Specifically, conditional on the training data $\mathcal{D}_1$, the maximal absolute calibration error converges to zero almost surely:
\begin{equation}
\sup_{x \in \mathcal{X}} \Bigl| \mathbb{E}\bigl[\,Y_{n+1} \mid \hat{f}_{n_1}(X_{n+1}) = \hat{f}_{n_1}(x), \mathcal{D}_1\bigr] - \hat{f}_{n_1}(x) \Bigr| \xrightarrow{a.s.} 0.
\end{equation}
\end{theorem}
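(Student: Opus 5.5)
Conditional on $\mathcal D_1$, the calibrator $\hat\theta_{n_1}$ is a fixed nondecreasing step function, so $\hat f_{n_1}(X_{n+1})=v$ if and only if $S_{n+1}:=f(X_{n+1})$ lies in the (interval) preimage $J_v=\hat\theta_{n_1}^{-1}(v)\cap\mathcal S$. By the tower property and independence of the test pair from $\mathcal D_1$,
\[
\mathbb E\bigl[Y_{n+1}\mid \hat f_{n_1}(X_{n+1})=v,\mathcal D_1\bigr]=\mathbb E\bigl[\theta_0(S)\mid S\in J_v\bigr],
\]
whenever $P(S\in J_v)>0$. Level sets of probability zero occur with probability zero under the test distribution and can be excluded, reading the supremum as essential. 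The left side is then an average of $\theta_0$ over $J_v$. It therefore lies between $\inf_{J_v}\theta_0$ and $\sup_{J_v}\theta_0$, and so the calibration error at any $x$ with $f(x)\in J_v$ is bounded by
\[
\sup_{s\in J_v}\bigl|\theta_0(s)-\hat\theta_{n_1}(s)\bigr|\le \sup_{s\in\mathcal S}\bigl|\hat\theta_{n_1}(s)-\theta_0(s)\bigr|.
\]
Points $x$ with $f(x)\notin\mathcal S$ are likewise handled up to null sets. The problem thus reduces to uniform almost-sure consistency of the isotonic estimator.

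The key step is showing $\sup_{s\in\mathcal S}|\hat\theta_{n_1}(s)-\theta_0(s)|\to0$ almost surely. I would invoke the classical strong uniform consistency of isotonic regression (Brunk; Hanson--Pledger--Wright; see also \citet{barlowIsotonicRegressionProblem1972}). On compact subintervals of the interior of $\mathcal S$ this holds when $\theta_0$ is continuous and increasing and $\mathbb E[Y^2]<\infty$.

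If I prove it directly, I would use the min--max representation
\[
\hat\theta(s)=\max_{u\le s}\min_{v\ge s}\bar Y[u,v],
\]
where $\bar Y[u,v]$ is the average of the $Y_i$ with $S_i\in[u,v]$. Take a finite grid on $\mathcal S$ such that $\theta_0$ oscillates by less than $\varepsilon$ on each cell, which is possible by uniform continuity on the compact set $\mathcal S$. The strong law of large numbers, applied to the finitely many grid intervals (Glivenko--Cantelli for the intervals, plus SLLN for $Y\mathbbm 1\{S\in[u,v]\}$, with VC control to pass to all intervals), makes every empirical interval average eventually within $\varepsilon$ of $\mathbb E[\theta_0(S)\mid S\in[u,v]]$. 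Monotonicity then sandwiches $\hat\theta$ within $O(\varepsilon)$ of $\theta_0$.

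The main obstacle is the boundary of $\mathcal S$. Isotonic regression is known to be inconsistent at the endpoints (the spiking problem), so a literal supremum over all of $\mathcal S$ may fail. For $s$ near an endpoint, the max--min formula may pick very short intervals containing only a few observations. I would address this by exploiting the structure above: the calibration error involves only averages of $\theta_0$ over an entire level set $J_v$, not pointwise values of $\hat\theta$. The extreme level set has a value $v$ equal to an empirical mean over its cell. Strict monotonicity of $\theta_0$ forces the boundary level sets to have asymptotically small mass, yet they must contain enough points for their averages to concentrate. Concretely, I would show via the same interval-uniform SLLN that every level set carrying positive test probability has value within $\varepsilon$ of its population average $\mathbb E[\theta_0(S)\mid S\in J_v]$. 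That interval-uniform SLLN is exactly what is needed; it suffices to handle intervals of probability at least $\delta$, while smaller cells contribute only $o(1)$ error by continuity of $\theta_0$ together with monotonicity. I expect this last uniformity argument to need the most care.
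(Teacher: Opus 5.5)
\textbf{The reduction matches the paper; the boundary repair fails.} Your reduction is exactly the paper's. You condition on $\mathcal D_1$, use the tower property to write the conditional mean on a level set as an average of $\theta_0$ over its preimage $J_v$, and bound the error by $\sup_{s\in\mathcal S}|\hat\theta_{n_1}(s)-\theta_0(s)|$. The paper then just cites uniform a.s.\ consistency of isotonic regression on all of $\mathcal S$. You are right that the classical results only give this on compact subintervals of the interior, so the paper's one-line appeal has exactly the defect you flag.

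The gap is your boundary repair. You claim that boundary level sets ``must contain enough points for their averages to concentrate'', i.e.\ that every level set of positive test probability has value within $\varepsilon$ of $\mathbb E[\theta_0(S)\mid S\in J_v]$. This is false, and in fact the displayed supremum need not tend to zero.

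\textbf{Counterexample.} Take $X=S\sim U[0,1]$, $f(x)=x$, and $Y=S+\xi$ with $\xi\sim N(0,1)$ independent of $S$. Assumptions 1--2 hold with $\theta_0(s)=s$. Let $Y_{(j)}$ be the response attached to the $j$-th smallest score, and $\xi_{[1]}\sim N(0,1)$ the noise attached to the smallest score. The leftmost fitted value is
$v_1=\min_{k}\frac1k\sum_{j\le k}Y_{(j)}\le Y_{(1)}=S_{(1)}+\xi_{[1]}$.
Hence, for every $n_1$, with probability at least $\Phi(-1)$ (where $\Phi$ is the standard normal cdf) we have $v_1\le S_{(1)}-1$. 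Meanwhile $J_{v_1}$ has positive probability and $\mathbb E[\theta_0(S)\mid S\in J_{v_1}]\ge 0$. So the calibration error on that level set exceeds $1-S_{(1)}\to 1$, and the supremum does not converge to zero even in probability.

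\textbf{Which steps break.} Write $\mathcal S=[a,b]$.
\begin{itemize}
\item Monotonicity gives only a one-sided bound near an endpoint: near the left end you get $\hat\theta_{n_1}\le\hat\theta_{n_1}(a+\delta)$, but no lower bound.
\item The boundary block can consist of $O_p(1)$ observations, so its mean does not concentrate. This is the spiking phenomenon you yourself mention.
\item The remark that ``smaller cells contribute only $o(1)$ error'' treats the supremum as if it were probability-weighted. It is not: a cell of tiny mass contributes its full error to the sup.
\end{itemize}

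\textbf{What your argument does prove.} It gives a restricted version. For fixed $\delta>0$, the supremum over $x$ with $f(x)\in[a+\delta,b-\delta]$ tends to zero a.s. The reason is that interior uniform consistency on $[a+\delta/2,b-\delta/2]$, together with strict monotonicity of $\theta_0$, eventually prevents the level sets through $[a+\delta,b-\delta]$ from reaching the boundary strips. Alternatively, you could aim for a $P$-averaged calibration error, which your tower-property bound controls by $\int|\hat\theta_{n_1}-\theta_0|\,dP_S$. Either way, the theorem statement itself, not just its proof, has to be weakened.
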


The proof of Theorem~\ref{thm:asymptotic_calibration_split} is given in Appendix~\ref{app:asymptotic_calibration}









 \subsection{Transductive Isotonic Conformal Prediction}
\label{sec:trans_iso_cp}

Split Isotonic CP (Section~\ref{sec:split_iso_cp}) attains prediction-conditional
coverage with a single isotonic fit, but only achieves \emph{empirical} self-calibration
of $\hat f$ on $\mathcal{D}_1$ (Proposition~\ref{prop:isotonic_empirical_calibration});
exact finite-sample population-level self-calibration is sacrificed. The original SC-CP
procedure recovers the latter, but at the cost of refitting the isotonic calibrator on
$\mathcal{D} \cup \{(X_{n+1}, y)\}$ for every candidate $y$, which is prohibitive for
continuous outcomes.

We propose \emph{Transductive Isotonic Conformal Prediction}, which recovers both
finite-sample self-calibration and prediction-conditional coverage without repeatedly
refitting the calibrator. The idea is to fit the isotonic calibrator once on
$\mathcal{D}_1$, then perform the transductive update locally, within the test point's
bin on $\mathcal{D}_2$, replacing the global isotonic refit with a scalar mean update.
Theorems~\ref{thm:transductive_exact_self_calibration} and~\ref{thm:transductive_coverage}
establish that this procedure attains exact finite-sample self-calibration and
prediction-conditional coverage.

The setup matches Split Isotonic CP through the calibration step: we fit $\hat\theta$ on
$\mathcal{D}_1$ and set $\hat f = \hat\theta \circ f$ as in
Section~\ref{sec:split_iso_cp}. For a test point $X_{n+1}$ we again identify the
calibration points in its level set,
\[
    \mathcal{I}(X_{n+1}) =
    \bigl\{ i \in \mathcal{I}_2 : \hat f(X_i) = \hat f(X_{n+1}) \bigr\},
    \qquad m = |\mathcal{I}(X_{n+1})|,
\]
falling back to a marginal split-conformal interval centered at $\hat f(X_{n+1})$, using
the residuals on all of $\mathcal{D}_2$, whenever $m = 0$.

The two procedures diverge in the next step. For each candidate $y \in \mathcal{Y}$ we
define the transductive local mean
\begin{equation}
\label{eq:transductive_mean}
\hat\mu^{(y)}(X_{n+1}) =
\frac{1}{m+1}
\left( \sum_{i \in \mathcal{I}(X_{n+1})} Y_i + y \right).
\end{equation}
This is the within-bin mean that would result from refitting an \emph{unconstrained}
regression on $\mathcal{D}_2 \cup \{(X_{n+1}, y)\}$ restricted to the test point's level
set. We then center the conformity scores on this local mean,
\begin{align}
    V_i^{(y)} &= \bigl| Y_i - \hat\mu^{(y)}(X_{n+1}) \bigr|,
    \quad i \in \mathcal{I}(X_{n+1}), \\
    V_{n+1}^{(y)} &= \bigl| y - \hat\mu^{(y)}(X_{n+1}) \bigr|,
\end{align}
and form the prediction interval
\begin{equation}
\widehat{C}(X_{n+1}) =
\Bigl\{ y \in \mathcal{Y} :
V_{n+1}^{(y)} \leq
Q_{1-\alpha}\bigl(
\{V_i^{(y)}\}_{i \in \mathcal{I}(X_{n+1})}
\cup \{V_{n+1}^{(y)}\}
\bigr) \Bigr\}.
\end{equation}

\begin{algorithm}[H]
\caption{Transductive Isotonic Conformal Prediction}
\label{alg:trans_iso_cp}
\KwIn{Dataset $\mathcal{D}$, base predictor $f$, miscoverage level $\alpha\in(0,1)$}

Split $\mathcal{D}$ into calibration folds $\mathcal{D}_1$ and $\mathcal{D}_2$\;
Fit isotonic calibrator $\hat{\theta}$ on $\mathcal{D}_1$ and set $\hat{f}\leftarrow\hat{\theta}\circ f$\;

\ForEach{test point $X_{n+1}$}{
    $\mathcal{I}(X_{n+1})\leftarrow\{i\in\mathcal{I}_2:\hat{f}(X_i)=\hat{f}(X_{n+1})\}$\;
    $m\leftarrow|\mathcal{I}(X_{n+1})|$\;

    \eIf{$m=0$}{
        \Return fallback prediction interval\;
    }{
        \For{candidate label $y\in\mathcal{Y}$}{
            $\hat\mu^{(y)}(X_{n+1})\leftarrow\dfrac{1}{m+1}\left(\sum_{i\in\mathcal{I}(X_{n+1})}Y_i+y\right)$\;
            $V_i^{(y)}\leftarrow\left|Y_i-\hat\mu^{(y)}(X_{n+1})\right|$ for all $i\in\mathcal{I}(X_{n+1})$\;
            $V_{n+1}^{(y)}\leftarrow\left|y-\hat\mu^{(y)}(X_{n+1})\right|$\;
        }
        \Return $\widehat{C}(X_{n+1})=\left\{y\in\mathcal{Y}:V_{n+1}^{(y)}\le Q_{1-\alpha}\!\left(\{V_i^{(y)}\}_{i \in \mathcal{I}(X_{n+1})}\cup\{V_{n+1}^{(y)}\}\right)\right\}$\;
    }
}
\end{algorithm}


Like Split Isotonic CP, the isotonic calibrator is fit only once, at cost
$O(n_1\log n_1)$ via PAVA, where $n_1 = |\mathcal{D}_1|$. The additional expense is an
inner loop over candidate labels. For each of the $K$ grid points $y$ used to discretize
$\mathcal{Y}$, the algorithm forms the local mean $\hat\mu^{(y)}(X_{n+1})$, computes the
$m = |\mathcal{I}(X_{n+1})|$ scores within the test point's level set, and sorts them to
obtain the quantile, at cost $O(m\log m)$. The per-test-point cost is therefore
\[
    O(K m\log m),
\]
or $O(K n_2\log n_2)$ in the worst case $m = n_2 = |\mathcal{D}_2|$. When the calibration
map has many steps, $m$ is typically far smaller than $n_2$, so the cost is governed by
the size of the local level set rather than the full calibration set.

The two procedures share the same setup and calibration, and differ only in how the local
quantile is formed: Split Isotonic CP fixes the residuals at $|Y_i - \hat f(X_i)|$,
whereas Transductive Isotonic CP centers them on the transductive local mean
$\hat\mu^{(y)}(X_{n+1})$. This update is what upgrades empirical to exact finite-sample
self-calibration (Theorem~\ref{thm:transductive_exact_self_calibration}); the factor of
$K$ above is the price it pays.

\subsubsection{Transductive Isotonic Conformal Prediction Theoretical Results}
\label{sec:theory_transductive}

For  Transductive Isotonic CP, the local transductive prediction for a candidate
label $y$ is
\[
\hat{\mu}^{(y)}(X_{n+1})
=
\frac{1}{m+1}
\left(
\sum_{i\in\mathcal I(X_{n+1})}Y_i+y
\right),
\]
where
\[
\mathcal I(X_{n+1})
=
\{i\in\mathcal I_2:\hat f(X_i)=\hat f(X_{n+1})\},
\qquad
m=|\mathcal I(X_{n+1})|.
\]

\begin{theorem}[Exact finite-sample self-calibration for Transductive Isotonic CP]
\label{thm:transductive_exact_self_calibration}
Assume that $\{(X_i,Y_i)\}_{i=1}^{n+1}$ are exchangeable. Define
\[
\hat\mu(X_{n+1})
=
\hat\mu^{(Y_{n+1})}(X_{n+1}).
\]
Then
\[
\mathbb E\left[
Y_{n+1}
\,\middle|\,
\hat\mu(X_{n+1})
\right]
=
\hat\mu(X_{n+1})
\qquad \text{a.s.}
\]
\end{theorem}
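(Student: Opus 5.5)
The plan is an exchangeability argument that conditions on everything except the labelling inside the test point's bin. Let $\mathcal{J} = \mathcal{I}(X_{n+1}) \cup \{n+1\}$ be the augmented bin, so $|\mathcal{J}| = m+1$. Then
\[
\hat\mu(X_{n+1}) = \frac{1}{m+1}\sum_{j\in\mathcal J} Y_j
\]
is the plain average of the outcomes in $\mathcal{J}$. We first condition on $\mathcal{D}_1$, so that $\hat\theta$ and hence $\hat f$ are fixed functions. Conditionally on $\mathcal{D}_1$, the points $\{(X_i,Y_i)\}_{i\in\mathcal I_2\cup\{n+1\}}$ remain exchangeable, since $\mathcal{D}_1$ is built from indices disjoint from these.

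Next, define the $\sigma$-field $\mathcal{G}$ generated by $\mathcal{D}_1$, the augmented bin $\mathcal{J}$ as an unordered set, and the unordered multiset of pairs $\{(X_j,Y_j)\}_{j\in\mathcal J}$. Here $\mathcal{J}$ is determined by the values $\hat f(X_i)$, $i \in \mathcal I_2\cup\{n+1\}$, and the event $\{n+1\in\mathcal J\}$ holds by construction. The key claim is that, conditionally on $\mathcal{G}$, $(X_{n+1},Y_{n+1})$ is uniformly distributed over the $m+1$ elements of that multiset.

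To establish the claim, note that permuting the indices of $\mathcal{I}_2\cup\{n+1\}$ leaves the joint law unchanged. Such a permutation also preserves the bin structure, because bin membership depends only on $\hat f(X_i)$ and the bin is defined relative to the test point's value. A standard Mondrian-type symmetry argument, the same one presumably used for Theorem~\ref{thm:split_coverage}, then shows that every element of $\mathcal J$ is equally likely to be the test point given the unordered bag.

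It follows that
\[
\mathbb E[Y_{n+1}\mid\mathcal G] = \frac{1}{m+1}\sum_{j\in\mathcal J}Y_j = \hat\mu(X_{n+1}).
\]
Since $\hat\mu(X_{n+1})$ is $\mathcal G$-measurable, $\sigma(\hat\mu(X_{n+1}))\subseteq\mathcal G$, and the tower property gives
\[
\mathbb E[Y_{n+1}\mid\hat\mu(X_{n+1})] = \mathbb E\bigl[\mathbb E[Y_{n+1}\mid\mathcal G]\mid \hat\mu(X_{n+1})\bigr] = \hat\mu(X_{n+1}) \quad\text{a.s.}
\]

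The fallback case $m=0$ needs only a brief remark. Then $\hat\mu^{(y)} = y$ and the identity is trivial, so I would note it and set it aside.

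The main obstacle is making the conditional uniformity rigorous. The set $\mathcal I(X_{n+1})$ is defined through the test point itself, so exchangeability of the full sequence does not immediately give exchangeability within a random subset. I would handle this by conditioning on the multiset $\{(X_j,Y_j)\}_{j\in \mathcal I_2\cup\{n+1\}}$ together with $\mathcal{D}_1$. Under this conditioning the test index is uniform over $\mathcal I_2\cup\{n+1\}$, and the partition into level sets of $\hat f$ is a function of the multiset. Further conditioning on the level-set value of the test point then makes its index uniform within that level set. Ties in the pairs cause no problem if we work with indices rather than values.

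Integrability requires $\mathbb E|Y|<\infty$. I would state this as an implicit assumption, since the theorem itself does not include it.
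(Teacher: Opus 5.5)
Your proof is correct, but it takes a different route from the paper's.

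The paper never conditions on the random bin. Its steps are:
\begin{itemize}
\item It defines the within-bin mean $\hat\mu_j$ for every $j\in\mathcal S=\mathcal I_2\cup\{n+1\}$.
\item It notes the pathwise identity $\sum_{j\in\mathcal S} g(\hat\mu_j)(Y_j-\hat\mu_j)=0$ for every bounded measurable $g$, because deviations from a group mean sum to zero.
\item It uses exchangeability of $(Z_j)_{j\in\mathcal S}$ given $\mathcal D_1$, together with the permutation-equivariance of $j\mapsto\hat\mu_j$, to conclude that all $N$ terms have equal expectation. Hence the test-point term has mean zero for every $g$.
\end{itemize}
What this buys is that symmetry is only invoked over the fixed index set $\mathcal S$. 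The obstacle you flag, exchangeability inside a subset defined through the test point, therefore never arises. The argument is also the same first-order-condition template used for SC-CP, so it carries over unchanged to fits whose level sets are themselves data-dependent.

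Your route instead computes the conditional law of the test point given the richer $\sigma$-field $\mathcal G$. This yields a stronger conclusion: $\mathbb E[Y_{n+1}\mid\mathcal G]=\hat\mu(X_{n+1})$, so calibration also holds conditionally on $\mathcal D_1$, the bin, and $m$. The same uniformity also gives Theorem~\ref{thm:transductive_coverage} directly, since the scores $V_j^{(Y_{n+1})}$ are functions of the bag.

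Three small precisions:
\begin{itemize}
\item The invariance you actually need is that of the event $\{\mathcal J=J\}$ under permutations of $J$ that fix $(\mathcal I_2\cup\{n+1\})\setminus J$. A general permutation of $\mathcal I_2\cup\{n+1\}$ moves the bins, so your phrasing there is too loose. The invariance holds because all indices of $J$ share one $\hat f$-value on that event; this is the extension trick from the proof of Theorem~\ref{thm:split_coverage}. Your fallback of conditioning on the full bag plus $\hat f(X_{n+1})$ is also rigorous.
\item The case $m=0$ needs no separate treatment, since your $\mathcal G$-argument covers a bag of size one. This matters because $\{m=0\}$ is generally not $\sigma(\hat\mu(X_{n+1}))$-measurable, so it could not literally be set aside.
\item Your integrability caveat $\mathbb E|Y|<\infty$ is implicitly required by the paper's proof as well.
\end{itemize}
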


The proof of Theorem~\ref{thm:transductive_exact_self_calibration} is given in
Appendix~\ref{app:proof_transductive_exact_self_calibration}.

\begin{theorem}[Prediction-conditional coverage for  Transductive Isotonic CP]
\label{thm:transductive_coverage}
Assume that $\{(X_i,Y_i)\}_{i=1}^{n+1}$ are exchangeable. Let
$\widehat C(X_{n+1})$ be the prediction interval constructed by
Algorithm~\ref{alg:trans_iso_cp}. Then
\[
\mathbb P\left(
Y_{n+1}\in \widehat C(X_{n+1})
\,\middle|\,
\hat\mu^{(Y_{n+1})}(X_{n+1})
\right)
\ge 1-\alpha.
\]
\end{theorem}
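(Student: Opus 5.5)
Our plan is to condition on the first-stage fit and on the bin structure, and then apply an exchangeability argument inside the test point's level set. The transductive mean $\hat\mu^{(Y_{n+1})}(X_{n+1})$ is a symmetric function of the bin's outcomes, and the scores are computed from it in a symmetric way.

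\textbf{Step 1: freeze the first stage.} Since $\hat\theta$ is fit only on $\mathcal D_1$, we condition on $\mathcal D_1$, which makes $\hat f$ a fixed step function. The points $\{(X_i,Y_i)\}_{i\in\mathcal I_2}\cup\{(X_{n+1},Y_{n+1})\}$ remain exchangeable given $\mathcal D_1$. Let $v=\hat f(X_{n+1})$ and write $B=\mathcal I(X_{n+1})\cup\{n+1\}$ for the augmented bin, which has size $m+1$. We then condition on the event $E$ that fixes the unordered set of indices in $\mathcal I_2\cup\{n+1\}$ falling into level $v$, together with the unordered multiset $M$ of outcomes $\{Y_j:j\in B\}$. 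The index $n+1$ is always in $B$, so we must work from exchangeability itself rather than treating $n+1$ as a fixed member. Conditionally on the level-membership pattern and on $M$, the assignment of the values in $M$ to the positions in $B$ is uniformly random over permutations. In particular, $Y_{n+1}$ is a uniform draw from $M$.

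\textbf{Step 2: scores are functions of $M$ and the rank of $Y_{n+1}$.} With $y=Y_{n+1}$, the centre $\hat\mu:=\hat\mu^{(Y_{n+1})}(X_{n+1})=\frac1{m+1}\sum_{j\in B}Y_j$ is the mean of $M$, so it is measurable with respect to $E$. Every score $V_j=|Y_j-\hat\mu|$ for $j\in B$ is then the same fixed function applied to each element of $M$. Hence $V_{n+1}$ is uniformly distributed over the multiset $\{V_j\}_{j\in B}$. The standard quantile lemma gives
\[
\mathbb P\bigl(V_{n+1}\le Q_{1-\alpha}(\{V_j\}_{j\in B})\,\big|\,E\bigr)\ge 1-\alpha .
\]
Under the convention of Algorithm~\ref{alg:trans_iso_cp}, this event is exactly $Y_{n+1}\in\widehat C(X_{n+1})$, since the quantile is taken over the augmented list. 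Ties are handled by the usual counting argument with the $\lceil(1-\alpha)(m+1)\rceil$-th order statistic.

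\textbf{Step 3: tower property.} Since $\hat\mu$ is $\sigma(E,\mathcal D_1)$-measurable, $\sigma(\hat\mu)\subseteq\sigma(E,\mathcal D_1)$. Averaging the conditional bound over this finer $\sigma$-field therefore yields the bound conditional on $\hat\mu^{(Y_{n+1})}(X_{n+1})$. The fallback case $m=0$ must be handled separately. There, the fallback is marginal split conformal on $\mathcal D_2$; alternatively one can note that $B=\{n+1\}$, so $\hat\mu=Y_{n+1}$, and the interval is treated as in the paper's convention. I would either include the event $\{m=0\}$ in the conditioning and invoke marginal validity on it, or take the convention that the quantile of a singleton with $\infty$ appended is infinite.

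\textbf{Main obstacle.} The delicate point is Step 1: justifying rigorously that, given the random bin membership and the multiset $M$, the label of the test point is uniform within $B$. Bin membership depends only on $X$'s through the fixed $\hat f$, so I would argue via the joint permutation invariance of the law of the $(n_2+1)$ pairs given $\mathcal D_1$. This invariance is applied to permutations that preserve the membership pattern, with $M$ treated as a symmetric statistic.
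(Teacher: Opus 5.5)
Your argument is correct and is essentially the paper's. Both condition on $\mathcal D_1$ and on the bin membership, which is invariant under permutations of the augmented bin. Both use that $\hat\mu^{(Y_{n+1})}(X_{n+1})$ is a symmetric function of the augmented bin, apply the quantile lemma, and finish with the tower property. Conditioning on the full outcome multiset $M$, rather than on $\hat\mu$ and the out-of-bin data as the paper does, is a harmless refinement that makes the uniform-rank step explicit.

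The one thing to fix is your first option for $m=0$, a case the paper's proof never handles. On $\{m=0\}$ you have $\hat\mu^{(Y_{n+1})}(X_{n+1})=Y_{n+1}$, so conditioning on $\hat\mu$ fixes the label. Marginal validity of the split-conformal fallback then gives no bound, and for extreme $Y_{n+1}$ the conditional coverage can be near $0$; so with that fallback the statement as written can actually fail. Only your second option works: apply the transductive rule itself, where $V_{n+1}^{(y)}=0$ for every $y$ and hence $\widehat C(X_{n+1})=\mathcal Y$.
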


The proof of Theorem~\ref{thm:transductive_coverage} is given in
Appendix~\ref{app:proof_transductive_coverage}.


\section{Numerical studies\label{sec:EXPERIMENTS}}
\subsection{Synthetic Data Regime}

We illustrate how calibrating a predictor can reduce the width of prediction intervals
while maintaining coverage conditional on the prediction. Following the setup of
\citet{laanSelfCalibratingConformalPrediction2024a}, we consider the following data-generating process. Fix
parameters $d \in \mathbb{N}$, $\kappa > 0$, and $a, b \geq 0$. Each dataset consists of
i.i.d.\ observations $\{(X_i, Y_i)\}_{i=1}^{n}$, where the covariate vector
$X = (X_1, \dots, X_d) \in [0,1]^d$ has independent coordinates with
\[
  X_j \sim \mathrm{Beta}(1, \kappa), \qquad j = 1, \dots, d.
\]

Conditional on $X=x$, the response is Gaussian,
\[
Y \mid X=x \sim \mathcal{N}\!\big(\mu(x),\,\sigma^2(x)\big),
\]
where
\[
\mu(x) := d^{-1/2}\sum_{j=1}^{d}\Bigl(x_j+\sin(4x_j)\Bigr),
\]
and
\[
\sigma^2(x) := \left\{\,0.035 \;-\; a\,\frac{\log\!\bigl(0.5+0.5x_1\bigr)}{8}
\;+\; b\,\frac{\left(\,|\mu(x)|^{6}/20 - 0.02\,\right)}{2}\right\}^{2}.
\]
In this setup, the parameters $a$ and $b$ control the level of heteroskedasticity and the strength of the mean--variance relationship. We train and calibrate the predictor under $\kappa=1$, and then induce distributional shift by varying $\kappa$ at test time. This shift perturbs the marginal distribution of $X$ and consequently introduces calibration error in the (uncalibrated) predictor. 

As shown in Table~\ref{tab:synthetic_coverage}, all three methods achieve empirical
coverage close to the nominal level $1 - \alpha = 0.9$ across the values of
$\kappa_{\mathrm{train}}$ considered here. The self-calibrating methods therefore pay
nothing in coverage for targeting the stronger prediction-conditional guarantee. The
methods are then separated by interval width: as $\kappa_{\mathrm{train}}$ increases the
predictor becomes more miscalibrated, and marginal-CP can only respond through a single
global inflation of width. The self-calibrating methods instead adapt the conformal
correction to the prediction itself, yielding narrower intervals at the same coverage.

While the original SC-CP implementation tends to produce slightly narrower
intervals, both isotonic procedures fit the calibrator only once, rather than
refitting it for every candidate label, and are correspondingly faster: fitting
and constructing intervals for $1000$ test points takes SC-CP on the order of ten
seconds, whereas both isotonic procedures finish in milliseconds, a speed-up of
two to three orders of magnitude (Figure~\ref{fig:speedup}). The transductive inner loop makes TICP a
small constant factor slower than SICP. This being the price to pay to recover exact 
finite-sample self-calibration (Theorem~\ref{thm:transductive_exact_self_calibration}).

\begin{table}[H]
\centering
\small
\caption{Empirical coverage and average interval width on the synthetic data, by dimension $d$ and test-time $\kappa$ (target coverage $0.90$, $\alpha=0.1$). Widths are averaged over finite intervals. \underline{Underlined} coverage marks configurations where Marginal CP falls below $0.88$; its width is not comparable to the self-calibrating methods there, since it achieves apparent narrowness only by undercovering. Widths are on the $\log(1{+}Y)$-free synthetic scale and are not comparable across $d$.}
\label{tab:synthetic_coverage}
\begin{tabular}{cc*{4}{cc}}
\hline
& & \multicolumn{2}{c}{Marginal CP} & \multicolumn{2}{c}{SICP} & \multicolumn{2}{c}{TICP} & \multicolumn{2}{c}{SC-CP} \\
$d$ & $\kappa$ & Cov. & Width & Cov. & Width & Cov. & Width & Cov. & Width \\
\hline
\multirow{4}{*}{1} & 1 & 0.899 & 0.41 & 0.923 & 0.43 & 0.925 & 0.43 & 0.897 & 0.38 \\
 & 1.5 & 0.910 & 0.43 & 0.933 & 0.46 & 0.938 & 0.46 & 0.913 & 0.40 \\
 & 2 & 0.912 & 0.44 & 0.933 & 0.45 & 0.930 & 0.45 & 0.919 & 0.42 \\
 & 3 & \underline{0.873} & 0.43 & 0.893 & 0.47 & 0.888 & 0.47 & 0.875 & 0.43 \\
\hline
\multirow{4}{*}{2} & 1 & 0.897 & 5.65 & 0.895 & 4.40 & 0.894 & 4.44 & 0.895 & 4.48 \\
 & 1.5 & 0.911 & 6.70 & 0.913 & 5.23 & 0.914 & 5.31 & 0.910 & 5.15 \\
 & 2 & 0.917 & 6.94 & 0.922 & 5.92 & 0.920 & 5.86 & 0.917 & 5.54 \\
 & 3 & \underline{0.830} & 4.53 & 0.874 & 5.27 & 0.869 & 5.30 & 0.852 & 4.97 \\
\hline
\multirow{4}{*}{5} & 1 & 0.904 & 429 & 0.902 & 318 & 0.902 & 322 & 0.899 & 307 \\
 & 1.5 & 0.923 & 569 & 0.928 & 443 & 0.928 & 442 & 0.928 & 435 \\
 & 2 & 0.886 & 491 & 0.918 & 430 & 0.918 & 429 & 0.921 & 435 \\
 & 3 & \underline{0.630} & 284 & 0.887 & 406 & 0.875 & 398 & 0.876 & 368 \\
\hline
\multirow{4}{*}{10} & 1 & 0.904 & 11966 & 0.909 & 11154 & 0.909 & 11130 & 0.904 & 9921 \\
 & 1.5 & 0.913 & 16371 & 0.932 & 15199 & 0.932 & 15131 & 0.932 & 14646 \\
 & 2 & \underline{0.853} & 14804 & 0.929 & 16609 & 0.930 & 16452 & 0.925 & 13817 \\
 & 3 & \underline{0.515} & 7234 & 0.890 & 11370 & 0.883 & 11584 & 0.872 & 9847 \\
\hline
\end{tabular}
\end{table}

\begin{figure}[htbp]
\centering
\includegraphics[width=\textwidth]{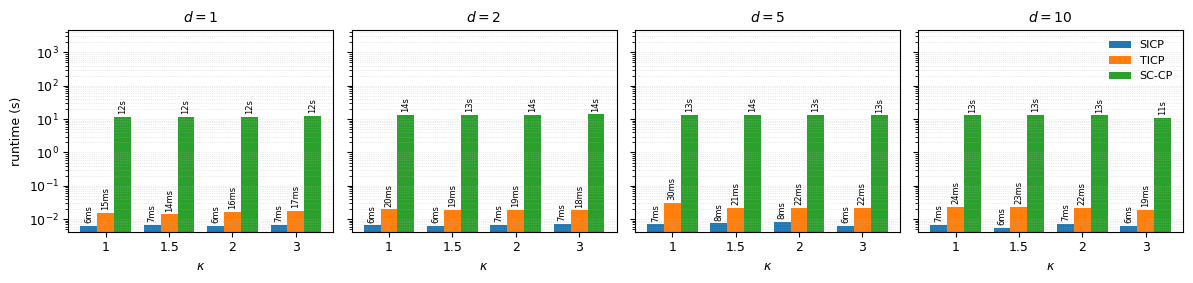}
\caption{Runtime by dimension $d$ and $\kappa$, on a logarithmic scale. Each
value is the wall-clock time to fit the calibrator once and construct prediction
intervals for $1000$ test points, averaged over replications. SC-CP refits its
calibrator for every candidate label; both isotonic procedures fit once.}
\label{fig:speedup}
\end{figure}
\clearpage
\section{Additional Data Experiments}
In this section, we present experimental results for the Concrete, STAR, Bike, and Bio datasets used in \citet{romano2019conformalized}. In the STAR dataset, the sensitive attribute 
A was \texttt{gender}; in the Bike dataset, it was \texttt{workingday}; and in the remaining datasets, it indicated whether the last feature was above or below its median value.

\subsection{Medical Expenditure Panel Survey}
We evaluate the methods on the Medical Expenditure Panel Survey (MEPS) dataset \citep{cohenMedicalExpenditurePanel2009}, following \citet{laanSelfCalibratingConformalPrediction2024a}. The response $Y$ is health care utilization, and all methods are applied to the transformed outcome $\log(1+Y)$. Features include standard sociodemographic and clinical covariates. We define a binary group variable $A$ from race, with $A = 1$ denoting individuals coded as White and $A = 0$ all others, and report coverage, interval width, and calibration error stratified by $A$.

The data are randomly partitioned into training, calibration, and test sets in proportions $50\%$, $30\%$, $20\%$. We consider two base-model training regimes. In Setting~1 (poorly calibrated), the model is trained on the raw outcome $Y$ and predictions are transformed via $\log(1 + \cdot)$ at test time; by Jensen's inequality, this transformation breaks self-calibration even when the base model is calibrated for $Y$, inducing systematic overprediction of $\log(1+Y)$. In Setting~2 (well calibrated), the model is trained directly on $\log(1+Y)$, so no transformation-induced bias is present.

\begin{table}[H]
    \centering
    \caption{Summary of empirical coverage, average interval width, and calibration error for MEPS-21, Setting~1 (poorly calibrated). Coverage is reported at nominal level $1-\alpha = 0.9$.}
    \label{tab:meps21_setting1_results}
    \begin{tabular}{lcccccc}
        \toprule
        & \multicolumn{2}{c}{Coverage} & \multicolumn{2}{c}{Width} & \multicolumn{2}{c}{Cal Err} \\
        \cmidrule(lr){2-3} \cmidrule(lr){4-5} \cmidrule(lr){6-7}
        Method & $A=0$ & $A=1$ & $A=0$ & $A=1$ & $A=0$ & $A=1$ \\
        \midrule
        Marginal CP              & 0.951 & 0.917 & 3.52 & 3.52 & - & - \\
        Split Isotonic CP        & 0.906 & 0.926 & 2.59 & 3.19 & 0.0048 & 0.0109 \\
        Transductive Isotonic CP & 0.905 & 0.921 & 2.57 & 3.32 & 0.0000 & 0.0000 \\
        SC-CP                    & 0.905 & 0.919 & 2.26 & 3.02 & 0.0008 & 0.0005 \\
        \bottomrule
    \end{tabular}
\end{table}

\begin{table}[H]
    \centering
    \caption{Summary of empirical coverage, average interval width, and calibration error for MEPS-21, Setting~2 (well calibrated). Coverage is reported at nominal level $1-\alpha = 0.9$.}
    \label{tab:meps21_setting2_results}
    \begin{tabular}{lcccccc}
        \toprule
        & \multicolumn{2}{c}{Coverage} & \multicolumn{2}{c}{Width} & \multicolumn{2}{c}{Cal Err} \\
        \cmidrule(lr){2-3} \cmidrule(lr){4-5} \cmidrule(lr){6-7}
        Method & $A=0$ & $A=1$ & $A=0$ & $A=1$ & $A=0$ & $A=1$ \\
        \midrule
        Marginal CP              & 0.917 & 0.887 & 2.91 & 2.91 & - & - \\
        Split Isotonic CP        & 0.900 & 0.924 & 2.60 & 3.21 & 0.0072 & 0.0139 \\
        Transductive Isotonic CP & 0.910 & 0.913 & 2.52 & 3.06 & 0.0000 & 0.0000 \\
        SC-CP                    & 0.912 & 0.906 & 2.28 & 2.92 & 0.0004 & 0.0004 \\
        \bottomrule
    \end{tabular}
\end{table}

Tables~\ref{tab:meps21_setting1_results} and~\ref{tab:meps21_setting2_results} report empirical coverage, average interval width, and calibration error within each level of $A$ at the nominal level $1 - \alpha = 0.9$. Marginal CP shows a coverage imbalance across groups: in Setting~2 it overcovers $A = 0$ ($0.917$) while dropping to $0.887$ for $A = 1$, and in Setting~1 it overcovers both groups unevenly ($0.951$ versus $0.917$). Because it applies a single global width, it cannot adapt across groups with differing outcome variability. Both self-calibrating methods reduce this imbalance, bringing the two groups closer together by widening intervals for $A = 1$ relative to $A = 0$.\footnote{Although interval width for the self-calibrating methods is not explicitly group-aware, $A$ is correlated with $f(X)$, so prediction-conditional adaptation produces groupwise adaptation as a by-product.}

Calibration of the base model also improves efficiency. In both settings, SC-CP yields the narrowest intervals among all methods and in both groups; in Setting~1, where the base model is most miscalibrated, SC-CP reduces average width by roughly $30\%$ in $A = 0$ and $10\%$ in $A = 1$ relative to Marginal CP. This is consistent with the prediction that calibrated point predictions reduce the magnitude of conformity scores and therefore the empirical quantile used to construct intervals. Transductive Isotonic CP achieves zero calibration error by construction\footnote{The first-order conditions of the isotonic regression objective imply $\mathbb{E}[Y \mid f(X)] = f(X)$ on the augmented calibration set; see Theorem~4.1 of \citet{laanSelfCalibratingConformalPrediction2024a}.} but does not match SC-CP on interval width, since it does not employ the prediction-conditional quantile step.

\begin{figure}[t]
    \centering

    \includegraphics[width=0.95\textwidth]{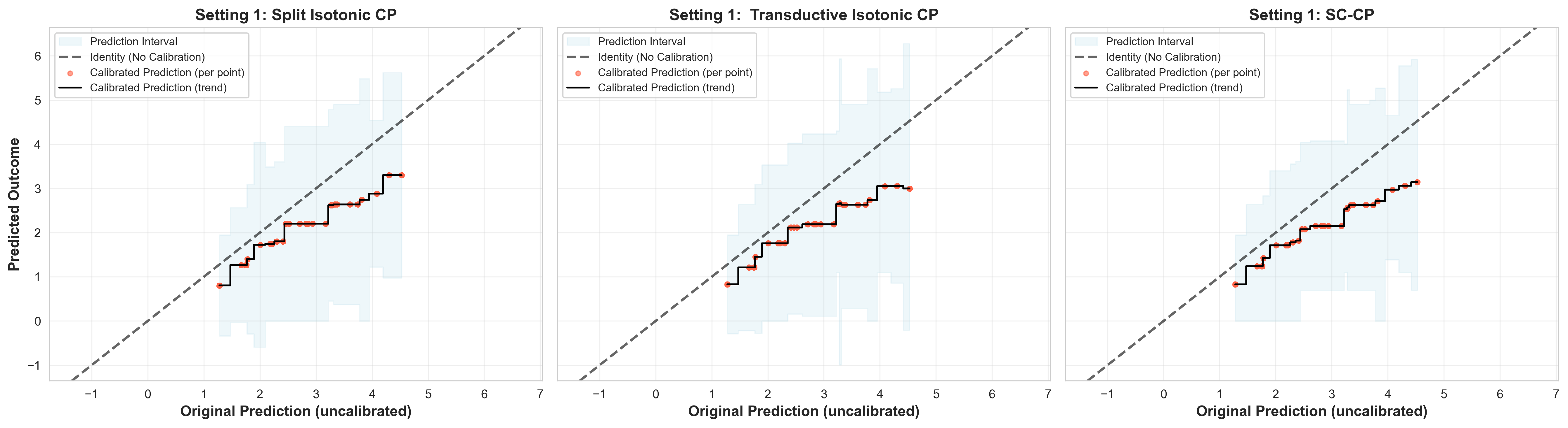}

    \vspace{0.75em}

    \includegraphics[width=0.95\textwidth]{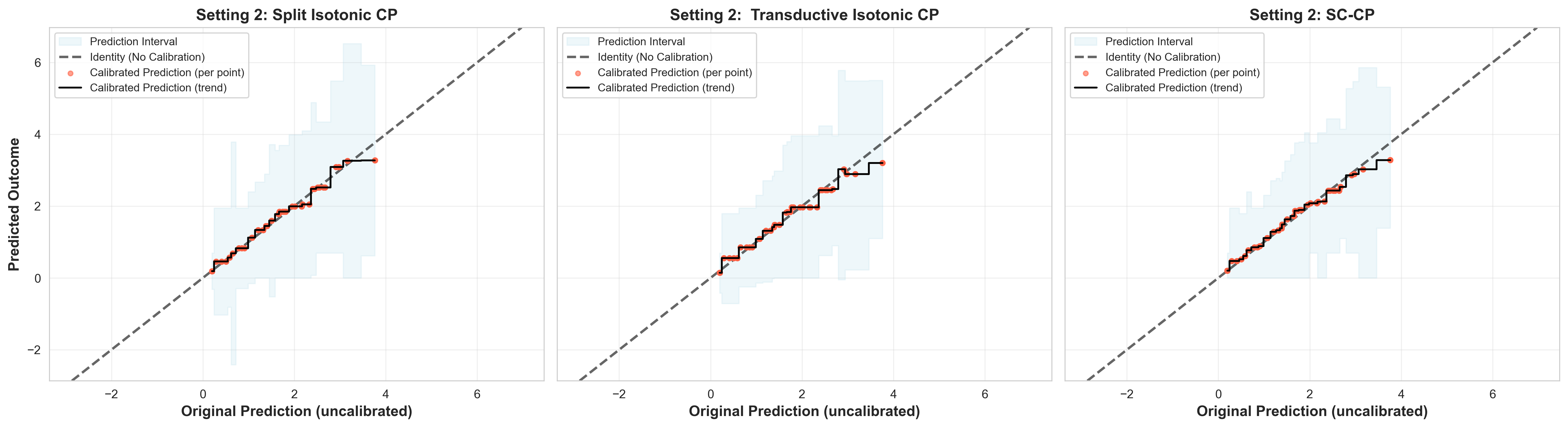}

    \caption{
    Prediction intervals and calibrated predictions for MEPS-21 in Setting~1 and Setting~2.
    Top: poorly calibrated base model. Bottom: well calibrated base model.
    The black step function denotes the calibrated prediction, and the shaded region denotes the conformal prediction interval.
    }
    \label{fig:meps_prediction_intervals}
\end{figure}

Figure~\ref{fig:meps_prediction_intervals} displays the output of each method. The horizontal axis is the original uncalibrated prediction $f(x)$, and the vertical axis is the outcome $\log(1+Y)$. The dashed diagonal is the identity line $y = x$; deviations of the calibrated point prediction (solid black step function) from this line visualize the calibration correction applied to $f$. The shaded band is the conformal prediction interval at the nominal level $1 - \alpha = 0.9$. In Setting~1 (top row), the calibrated step function lies uniformly below the diagonal across all three methods, with the magnitude of the correction increasing in $f(x)$ and flattening at the upper end of the prediction range where data are sparse, reflecting the systematic overprediction characterized in the preceding paragraph. In Setting~2 (bottom row), the calibrated step function tracks the diagonal, indicating that little correction is required. In both settings, the interval width grows with $f(x)$, reflecting the mean-variance coupling in the outcome distribution that the prediction-conditional construction is designed to capture.

\section{Conclusion\label{sec:CONCLUSION}}
We introduced Isotonic Conformal Prediction, a framework that construct prediction intervals with coverage conditional on a calibrated point prediction, while avoiding repeated isotonic calibration fits. The framework consists of two procedures. Split Isotonic CP fits the calibrator once and conformalizes within level sets of the resulting step function, attaining finite sample coverage conditional on the prediction. Transductive Isotonic CP additionally recovers exact finite sample self-calibration, while avoiding multiple calibrator fits.

Our methods inherit the trade-offs of split-sample procedures. Because the calibrator is fit on a held-out subset $\mathcal{D}_1$ and conformity scores are computed on a separate subset $\mathcal{D}_2$, we use less data for each stage than a full transductive method, which can cost statistical
efficiency, particularly in small samples.  Split
Isotonic CP relaxes the exact finite-sample self-calibration of
SC-CP to an empirical and asymptotic guarantee; exact finite-sample self-calibration is recovered only by the transductive variant. Finally, the prediction-conditional guarantee is conditional on the test point's
level set being populated by $\mathcal{D}_2$: when a level set contains no calibration points, we fall back to a marginal interval and forego the conditional guarantee for that point. While this is rare for moderate sample sizes given the coarseness of isotonic level sets, it remains a
genuine limitation at the boundaries of the support.

Several directions for future work arise from these methods, including
extending the framework to the classification setting and incorporating
resampling techniques to recover some of the statistical efficiency lost
to data splitting.



\newpage{}

\addcontentsline{toc}{section}{REFERENCES}
\setlength{\bibsep}{0.0pt}\footnotesize 

\bibliographystyle{rss}
\bibliography{ref}

\clearpage
\normalsize
\appendix

\section{Proofs}
\label{app:proofs}

\subsection{Proof of Theorem~\ref{thm:split_coverage}}
\label{app:proof_split_coverage}

\begin{proof}
We first note that, conditional on $\D_1$, the calibration observations in $\D_2$ together with the test point $(X_{n+1},Y_{n+1})$ are exchangeable. This is an immediate consequence of Fact 3.7 in \citet{angelopoulosTheoreticalFoundationsConformal2026}. Now define
\[
\mathcal S := \{\, i \in \mathcal I_2 \cup \{n+1\} : \hat f(X_i) = \hat f(X_{n+1}) \,\},
\]
and let
\[
Z_i := (X_i,Y_i), \qquad i \in \mathcal I_2 \cup \{n+1\}.
\]

We next claim that, conditional on \((\D_1,\mathcal S,\hat f(X_{n+1}))\), the subcollection \((Z_i)_{i\in\mathcal S}\) is exchangeable. To see this, let \(\pi\) be an arbitrary permutation of the index set \(\mathcal S\). Extend this to a permutation $\sigma$ of $\mathcal{I}_2 \cup \{n+1\}$ \[
\sigma(i) =
\begin{cases}
\pi(i), & i \in \mathcal{S},\\
i, & i \notin \mathcal{S}.
\end{cases}
\]
Then for any measurable sets $A,B$ of appropriate dimensions, we get by exchangeability conditional on $\D_1$
\begin{align*}
\mathbb P\!\left( (Z_i)_{i\in\mathcal S} \in A,  (\mathcal S,\hat f(X_{n+1})) \in B\,\middle|\, \D_1 \right)
=
\mathbb P\!\left( (Z_{\sigma(i)})_{i\in\mathcal S} \in A,  (\mathcal S,\hat f(X_{n+1})) \in B\,\middle|\, \D_1 \right). \\
\end{align*}

Since $\mathcal{S}, \hat{f}(X_{n+1})$ is invariant under the permutation $\sigma$. This implies \begin{align*}
\mathbb P\!\left( (Z_{i})_{i\in\mathcal S} \in A \,\middle|\, \D_1,\mathcal S,\hat f(X_{n+1}) \right) =\mathbb P\!\left( (Z_{\sigma(i)})_{i\in\mathcal S} \in A \,\middle|\, \D_1,\mathcal S,\hat f(X_{n+1}) \right) . 
\end{align*} 

Since $\sigma(i) = \pi(i)$ for $i \in \mathcal{S}$ we get,
\begin{align*}
\mathbb P\!\left( (Z_{i})_{i\in\mathcal S} \in A \,\middle|\, \D_1,\mathcal S,\hat f(X_{n+1}) \right) =\mathbb P\!\left( (Z_{\pi(i)})_{i\in\mathcal S} \in A \,\middle|\, \D_1,\mathcal S,\hat f(X_{n+1}) \right) . 
\end{align*} 
 Hence \((Z_i)_{i\in\mathcal S}\) is exchangeable conditional on
\((\D_1,\mathcal S,\hat f(X_{n+1}))\). By construction $\{n+1\} \in \mathcal{S}$, and since the data in $\mathcal{S}$ is exchangeable, the scores $\mathcal{R}$ are exchangeable as well. Therefore, using the standard conformal quantile argument\[\mathbb{P}(R_{n+1} \leq Q \mid \D_1,\mathcal S,\hat f(X_{n+1})) \geq 1-\alpha.\]

Applying the tower property, we get \[\mathbb{P}(R_{n+1} \leq Q \mid \hat f(X_{n+1})) \geq 1-\alpha.\] Since $Y_{n+1} \in \widehat{C}(X_{n+1})$ if and only if $R_{n+1} \leq Q$, we have shown the desired result.
\end{proof}

\subsection{Proof of Proposition~\ref{prop:isotonic_empirical_calibration}}
\label{app:split_empirical_calibration}
\begin{proof}[Proof]
Fix a block $\mathcal B_k$. By construction of the isotonic fit,
$\hat f(X_i)=v_k$ for every $i\in \mathcal B_k$. Once the final
constant block $\mathcal B_k$ has been formed, the value assigned to that
block must minimize the local squared-error loss
\[
    c \mapsto \sum_{i\in \mathcal B_k} (Y_i-c)^2.
\]
Therefore,
\[
    v_k
    =
    \arg\min_{c\in\mathbb R}
    \sum_{i\in \mathcal B_k} (Y_i-c)^2.
\]
Differentiating with respect to $c$ gives
\[
    -2\sum_{i\in \mathcal B_k}(Y_i-c)=0.
\]
Hence,
\[
    c
    =
    \frac{1}{|\mathcal B_k|}
    \sum_{i\in \mathcal B_k}Y_i.
\]
Since the fitted value on the block is $v_k$, we obtain
\[
    v_k
    =
    \frac{1}{|\mathcal B_k|}
    \sum_{i\in \mathcal B_k}Y_i.
\]
Because $k$ was arbitrary, the identity holds for every
$k=1,\ldots,K$.
\end{proof}

\subsection{Proof of Theorem~\ref{thm:asymptotic_calibration_split}}
\label{app:asymptotic_calibration}
\begin{proof}[Proof ]
Under Assumptions 1 and 2, classical consistency results for isotonic regression \citep[e.g.,][]{robertson1988order} dictate that the estimator $\hat{\theta}_{n_1}$ converges uniformly to the true conditional expectation function $\theta_0$ over the compact support $\mathcal{S}$. That is,
\begin{equation}
\label{eq:uniform_conv}
\sup_{s \in \mathcal{S}} \bigl| \hat{\theta}_{n_1}(s) - \theta_0(s) \bigr| \xrightarrow{a.s.} 0 \quad \text{as } n_1 \to \infty.
\end{equation}

For a fixed query $x \in \mathcal{X}$, let $v = \hat{f}_{n_1}(x) = \hat{\theta}_{n_1}(f(x))$ denote the predicted value. To evaluate the population calibration error, we apply the tower property of conditional expectation. Conditional on the split $\mathcal{D}_1$, the mapping $\hat{\theta}_{n_1}$ is deterministic. Conditioning further on the true base prediction $f(X_{n+1})$, we have:
\begin{align}
\mathbb{E}\bigl[\,Y_{n+1} \mid \hat{f}_{n_1}(X_{n+1}) = v, \mathcal{D}_1\bigr] 
&= \mathbb{E}\Bigl[\, \mathbb{E}\bigl[Y_{n+1} \mid f(X_{n+1}), \mathcal{D}_1\bigr] \Bigm| \hat{f}_{n_1}(X_{n+1}) = v, \mathcal{D}_1 \,\Bigr] \nonumber \\
&= \mathbb{E}\Bigl[\, \theta_0(f(X_{n+1})) \Bigm| \hat{f}_{n_1}(X_{n+1}) = v, \mathcal{D}_1 \,\Bigr]. \label{eq:tower_prop}
\end{align}
The second equality follows directly from the independence of $(X_{n+1}, Y_{n+1})$ and $\mathcal{D}_1$, whereby $\mathbb{E}[Y_{n+1} \mid f(X_{n+1}), \mathcal{D}_1] = \mathbb{E}[Y_{n+1} \mid f(X_{n+1})] = \theta_0(f(X_{n+1}))$.

Subtracting the predicted value $v$ from both sides, and noting that conditional on the event $\hat{f}_{n_1}(X_{n+1}) = v$, we can trivially move the constant $v$ inside the expectation:
\begin{align}
\mathbb{E}\bigl[\,Y_{n+1} \mid \hat{f}_{n_1}(X_{n+1}) = v, \mathcal{D}_1\bigr] - v 
&= \mathbb{E}\Bigl[\, \theta_0(f(X_{n+1})) - v \Bigm| \hat{f}_{n_1}(X_{n+1}) = v, \mathcal{D}_1 \,\Bigr] \nonumber \\
&= \mathbb{E}\Bigl[\, \theta_0(f(X_{n+1})) - \hat{f}_{n_1}(X_{n+1}) \Bigm| \hat{f}_{n_1}(X_{n+1}) = v, \mathcal{D}_1 \,\Bigr]. \label{eq:diff_expectation}
\end{align}
Taking the absolute value and applying Jensen's inequality yields:
\begin{align}
\Bigl| \mathbb{E}\bigl[\,Y_{n+1} \mid \hat{f}_{n_1}(X_{n+1}) = v, \mathcal{D}_1\bigr] - v \Bigr| 
&\leq \mathbb{E}\Bigl[\, \bigl| \theta_0(f(X_{n+1})) - \hat{f}_{n_1}(X_{n+1}) \bigr| \Bigm| \hat{f}_{n_1}(X_{n+1}) = v, \mathcal{D}_1 \,\Bigr] \nonumber \\
&\leq \sup_{s \in \mathcal{S}} \bigl| \theta_0(s) - \hat{\theta}_{n_1}(s) \bigr|. \label{eq:sup_bound}
\end{align}
Crucially, the upper bound in \eqref{eq:sup_bound} is entirely independent of the specific query value $x$. Taking the supremum over all $x \in \mathcal{X}$ yields:
\begin{equation}
\sup_{x \in \mathcal{X}} \Bigl| \mathbb{E}\bigl[\,Y_{n+1} \mid \hat{f}_{n_1}(X_{n+1}) = \hat{f}_{n_1}(x), \mathcal{D}_1\bigr] - \hat{f}_{n_1}(x) \Bigr| \leq \sup_{s \in \mathcal{S}} \bigl| \theta_0(s) - \hat{\theta}_{n_1}(s) \bigr|.
\end{equation}
By the uniform convergence established in \eqref{eq:uniform_conv}, the right-hand side converges to $0$ almost surely as $n_1 \to \infty$. This concludes the proof.
\end{proof}

\subsection{Proof of Theorem~\ref{thm:transductive_exact_self_calibration}}
\label{app:proof_transductive_exact_self_calibration}
\begin{proof}[Proof]
We analyze the oracle predictions jointly over the set of all calibration and test points. Let $\mathcal{S} = \mathcal{I}_2 \cup \{n+1\}$ denote the index set of the hold-out scoring data and the test point, with size $N = n_2 + 1$. 

Conditional on the proper training set $\mathcal{D}_1$, the mapping $\hat{f}$ is fixed. For any $j \in \mathcal{S}$, we define its corresponding oracle transductive prediction symmetrically as the empirical mean of the responses within its designated bin:
\begin{equation}
\hat{\mu}_j = \frac{1}{|\mathcal{S}_j|} \sum_{k \in \mathcal{S}_j} Y_k, \quad \text{where } \mathcal{S}_j = \bigl\{ k \in \mathcal{S} : \hat{f}(X_k) = \hat{f}(X_j) \bigr\}.
\end{equation}
Let $g: \mathbb{R} \to \mathbb{R}$ be any bounded measurable function. Consider the sum of the product of $g(\hat{\mu}_j)$ and the local residuals $(Y_j - \hat{\mu}_j)$ across all points $j \in \mathcal{S}$:
\begin{equation}
\label{eq:sum_residuals}
\sum_{j \in \mathcal{S}} g(\hat{\mu}_j) (Y_j - \hat{\mu}_j).
\end{equation}
We group this sum by the distinct level-set values of $\hat{f}$. Let $\mathcal{V}$ be the set of unique values taken by $\hat{f}(X_j)$ for $j \in \mathcal{S}$. For each $v \in \mathcal{V}$, let $\mathcal{S}_v = \{ j \in \mathcal{S} : \hat{f}(X_j) = v \}$. By definition, for all $j \in \mathcal{S}_v$, the oracle prediction is identically: $\hat{\mu}_j = \frac{1}{|\mathcal{S}_v|} \sum_{k \in \mathcal{S}_v} Y_k =: \bar{Y}_v$.

The sum in \eqref{eq:sum_residuals} can thus be algebraically factored:
\begin{equation}
\sum_{j \in \mathcal{S}} g(\hat{\mu}_j) (Y_j - \hat{\mu}_j) = \sum_{v \in \mathcal{V}} \sum_{j \in \mathcal{S}_v} g(\bar{Y}_v) (Y_j - \bar{Y}_v) = \sum_{v \in \mathcal{V}} g(\bar{Y}_v) \underbrace{\left( \sum_{j \in \mathcal{S}_v} (Y_j - \bar{Y}_v) \right)}_{=\, 0}.
\end{equation}
Because the sum of deviations from the arithmetic mean within any fixed group is precisely zero, the entire sum over the dataset is strictly zero for any realization of the data:
\begin{equation}
\label{eq:zero_sum_algebra}
\sum_{j \in \mathcal{S}} g(\hat{\mu}_j) (Y_j - \hat{\mu}_j) = 0.
\end{equation}

Taking the expectation conditional on $\mathcal{D}_1$, we obtain:
\begin{equation}
\mathbb{E}\left[ \sum_{j \in \mathcal{S}} g(\hat{\mu}_j) (Y_j - \hat{\mu}_j) \Biggm| \mathcal{D}_1 \right] = 0.
\end{equation}
Crucially, since the pairs $\{(X_j, Y_j)\}_{j \in \mathcal{S}}$ are exchangeable conditional on $\mathcal{D}_1$, and the mapping to compute $\hat{\mu}_j$ is perfectly symmetric with respect to permutations of $\mathcal{S}$, the expectation of each individual term in the sum is identical. Therefore, focusing on the test point $n+1$:
\begin{equation}
N \cdot \mathbb{E}\left[\, g(\hat{\mu}_{n+1}) (Y_{n+1} - \hat{\mu}_{n+1}) \mid \mathcal{D}_1 \,\right] = 0.
\end{equation}
Because this equality holds for any bounded measurable function $g$, it immediately follows from the definition of conditional expectation that:
\begin{equation}
\mathbb{E}\left[\,Y_{n+1} \mid \hat{\mu}_{n+1}, \mathcal{D}_1\,\right] = \hat{\mu}_{n+1} \quad \text{a.s.}
\end{equation}
Finally, by the tower property of conditional expectation, marginalizing over $\mathcal{D}_1$ yields $\mathbb{E}[\,Y_{n+1} \mid \hat{\mu}_{n+1}\,] = \hat{\mu}_{n+1}$, establishing the exact finite-sample self-calibration property.
\end{proof}

\subsection{Proof of Theorem~\ref{thm:transductive_coverage}}
\label{app:proof_transductive_coverage}
\begin{proof}[Proof]
Condition on $\mathcal{D}_1$ so that $\hat{f} = \hat{\theta} \circ f$ is a fixed deterministic function. The remaining data $\{X_i,Y_i\}_{i \in \mathcal{I}_2 \cup \{n+1\}}$ remain exchangeable conditional on $\mathcal{D}_1$. Throughout the rest of the proof, we assume that we have conditioned on $\mathcal{D}_1$.

For a test point $X_{n+1}$, define the group
\[
\mathcal{I}(X_{n+1}) = \{i \in \mathcal{I}_2 : \hat{f}(X_i) = \hat{f}(X_{n+1})\}.
\]
Conditional on $\mathcal{I}(X_{n+1})$ and $\{(X_i,Y_i)\}_{i\in\mathcal{I}_2\setminus\mathcal{I}(X_{n+1})}$, the collection $\{(X_i,Y_i)\}_{i \in \mathcal{I}(X_{n+1}) \cup \{n+1\}}$ is exchangeable.

Relabel the data in $\mathcal{I}(X_{n+1})$ as $\{(X_j,Y_j)\}_{j=1}^{m}$, where $m=|\mathcal{I}(X_{n+1})|$. For each $y\in\mathcal{Y}$, define
\[
\begin{aligned}
\hat{\mu}^{(y)}(X_{n+1})
&=
\frac{1}{m+1}\left(\sum_{j=1}^{m}Y_j + y\right),\\
V_j^{(y)}
&=\left|Y_j-\hat{\mu}^{(y)}(X_{n+1})\right|\quad (j=1,\dots,m),\\
V_{n+1}^{(y)}
&=\left|y-\hat{\mu}^{(y)}(X_{n+1})\right|.
\end{aligned}
\]
Now take $y=Y_{n+1}$. The statistic $\hat{\mu}^{(Y_{n+1})}(X_{n+1})$ is a symmetric function of the exchangeable pairs
$\{(X_j,Y_j)\}_{j=1}^{m}\cup\{(X_{n+1},Y_{n+1})\}$. Therefore, conditional on $\mathcal{I}(X_{n+1})$,
$\{(X_i,Y_i)\}_{i\in\mathcal{I}_2\setminus\mathcal{I}(X_{n+1})}$, and $\hat{\mu}^{(Y_{n+1})}(X_{n+1})$, the collection
$\{(X_j,Y_j)\}_{j=1}^{m}\cup\{(X_{n+1},Y_{n+1})\}$ remains exchangeable; hence the corresponding augmented scores are exchangeable as well.

Hence, by the standard conformal quantile/rank argument,
\[
\begin{aligned}
\mathbb{P}\!\Bigl(
&V_{n+1}^{(Y_{n+1})}
\le
Q_{1-\alpha}\!\left(\{V_i^{(Y_{n+1})}:i\in\mathcal{I}(X_{n+1})\}\cup\{V_{n+1}^{(Y_{n+1})}\}\right)
\,\Big|\,
\mathcal{D}_1,\mathcal{I}(X_{n+1}), \\[-2pt]
&\{(X_i,Y_i)\}_{i\in\mathcal{I}_2\setminus\mathcal{I}(X_{n+1})},\hat{\mu}^{(Y_{n+1})}(X_{n+1})
\Bigr)
\ge 1-\alpha.
\end{aligned}
\]
Using the law of total expectation, we get 
\[
\mathbb{P}\!\left(
V_{n+1}^{(Y_{n+1})}
\le
Q_{1-\alpha}\!\left(\{V_i^{(Y_{n+1})}:i\in\mathcal{I}(X_{n+1})\}\cup\{V_{n+1}^{(Y_{n+1})}\}\right)
\middle|
\hat{\mu}^{(Y_{n+1})}(X_{n+1})
\right)
\ge 1-\alpha.
\]

\end{proof}


\section{Additional Experiments}
\subsection{Bike Dataset}

\begin{table}[H]
\centering
\caption{Results for Bike Dataset -- Setting 1 (Poorly-Calibrated). Coverage at nominal level $1-\alpha=0.9$.}
\label{tab:bike_s1}
\begin{tabular}{lcccccc}
\toprule
\textbf{Method} & \multicolumn{2}{c}{\textbf{Coverage}} & \multicolumn{2}{c}{\textbf{Avg Width}} & \multicolumn{2}{c}{\textbf{Cal Error}} \\
\cmidrule(lr){2-3} \cmidrule(lr){4-5} \cmidrule(lr){6-7}
 & $A=0$ & $A=1$ & $A=0$ & $A=1$ & $A=0$ & $A=1$ \\
\midrule
Marginal CP & 0.858 & 0.917 & 1.77 & 1.77 & - & - \\
Split Isotonic CP & 0.875 & 0.931 & 2.00 & 1.92 & 0.0114 & 0.0116 \\
Transductive Isotonic CP & 0.863 & 0.921 & 1.81 & 1.79 & 0.0000 & 0.0000 \\
SC-CP & 0.840 & 0.918 & 1.66 & 1.68 & 0.0005 & 0.0006 \\
\bottomrule
\end{tabular}
\end{table}

 \begin{table}[H]
\centering
\caption{Results for Bike Dataset -- Setting 2 (Well-Calibrated). Coverage at nominal level $1-\alpha=0.9$.}
\label{tab:bike_s2}
\begin{tabular}{lcccccc}
\toprule
\textbf{Method} & \multicolumn{2}{c}{\textbf{Coverage}} & \multicolumn{2}{c}{\textbf{Avg Width}} & \multicolumn{2}{c}{\textbf{Cal Error}} \\
\cmidrule(lr){2-3} \cmidrule(lr){4-5} \cmidrule(lr){6-7}
 & $A=0$ & $A=1$ & $A=0$ & $A=1$ & $A=0$ & $A=1$ \\
\midrule
Marginal CP & 0.879 & 0.931 & 1.77 & 1.77 & - & - \\
Split Isotonic CP & 0.876 & 0.925 & 1.88 & 1.90 & 0.0138 & 0.0133 \\
Transductive Isotonic CP & 0.869 & 0.931 & 1.87 & 1.87 & 0.0000 & 0.0000 \\
SC-CP & 0.866 & 0.926 & 1.76 & 1.72 & 0.0013 & 0.0005 \\
\bottomrule
\end{tabular}
\end{table}

 \begin{figure}[H]
    \centering

    \begin{subfigure}{0.98\textwidth}
        \centering
        \includegraphics[width=\textwidth]{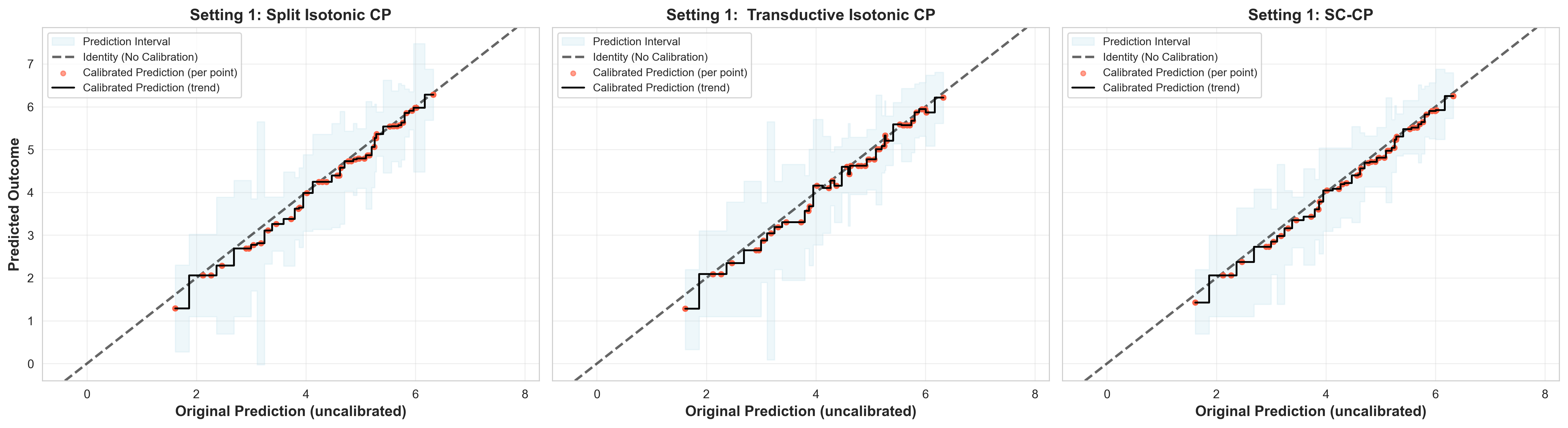}
        \caption{Setting 1: poorly calibrated base model.}
        \label{fig:bike_setting1_calibration}
    \end{subfigure}

    \vspace{0.5em}

    \begin{subfigure}{0.98\textwidth}
        \centering
        \includegraphics[width=\textwidth]{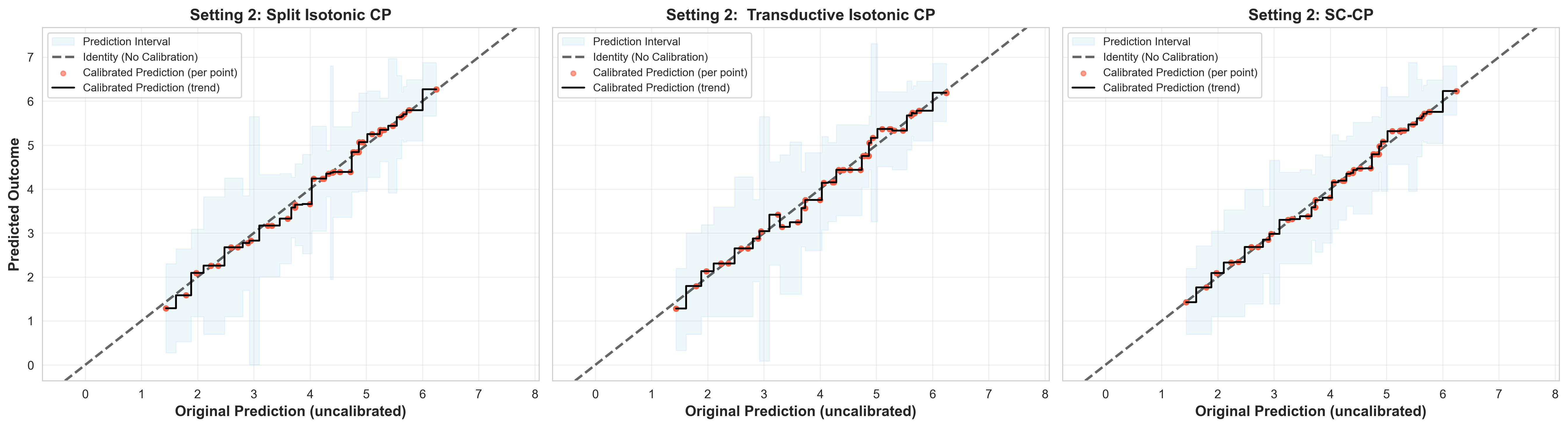}
        \caption{Setting 2: well calibrated base model.}
        \label{fig:bike_setting2_calibration}
    \end{subfigure}

    \caption{
    Prediction intervals and calibrated predictions for the Bike dataset under two calibration settings.
    The dashed gray line denotes the identity map, corresponding to no calibration.
    The black step function denotes the calibrated prediction, the red points denote pointwise calibrated predictions, and the shaded region denotes the conformal prediction interval.
    }
    \label{fig:bike_calibration_plots}
\end{figure}
 
\subsection{Concrete Dataset}

\begin{table}[H]
\centering
\caption{Results for Concrete Dataset -- Setting 1 (Poorly-Calibrated). Coverage at nominal level $1-\alpha=0.9$.}
\label{tab:concrete_s1}
\begin{tabular}{lcccccc}
\toprule
\textbf{Method} & \multicolumn{2}{c}{\textbf{Coverage}} & \multicolumn{2}{c}{\textbf{Avg Width}} & \multicolumn{2}{c}{\textbf{Cal Error}} \\
\cmidrule(lr){2-3} \cmidrule(lr){4-5} \cmidrule(lr){6-7}
 & $A=0$ & $A=1$ & $A=0$ & $A=1$ & $A=0$ & $A=1$ \\
\midrule
Marginal CP & 0.873 & 0.982 & 1.07 & 1.07 & - & - \\
Split Isotonic CP & 0.900 & 0.946 & 1.16 & 0.94 & 0.0103 & 0.0111 \\
Transductive Isotonic CP & 0.967 & 0.946 & 1.53 & 1.01 & 0.0000 & 0.0000 \\
SC-CP & 0.873 & 0.893 & 1.12 & 0.87 & 0.0027 & 0.0014 \\
\bottomrule
\end{tabular}
\end{table}

\begin{table}[H]
\centering
\caption{Results for Concrete Dataset -- Setting 2 (Well-Calibrated). Coverage at nominal level $1-\alpha=0.9$.}
\label{tab:concrete_s2}
\begin{tabular}{lcccccc}
\toprule
\textbf{Method} & \multicolumn{2}{c}{\textbf{Coverage}} & \multicolumn{2}{c}{\textbf{Avg Width}} & \multicolumn{2}{c}{\textbf{Cal Error}} \\
\cmidrule(lr){2-3} \cmidrule(lr){4-5} \cmidrule(lr){6-7}
 & $A=0$ & $A=1$ & $A=0$ & $A=1$ & $A=0$ & $A=1$ \\
\midrule
Marginal CP & 0.827 & 0.982 & 1.04 & 1.04 & - & - \\
Split Isotonic CP & 0.900 & 0.964 & 1.18 & 0.89 & 0.0064 & 0.0123 \\
Transductive Isotonic CP & 0.953 & 0.929 & 1.32 & 0.80 & 0.0000 & 0.0000 \\
SC-CP & 0.867 & 0.911 & 1.12 & 0.79 & 0.0002 & 0.0004 \\
\bottomrule
\end{tabular}
\end{table}
 
\begin{figure}[H]
    \centering

    \begin{subfigure}{0.98\textwidth}
        \centering
        \includegraphics[width=\textwidth]{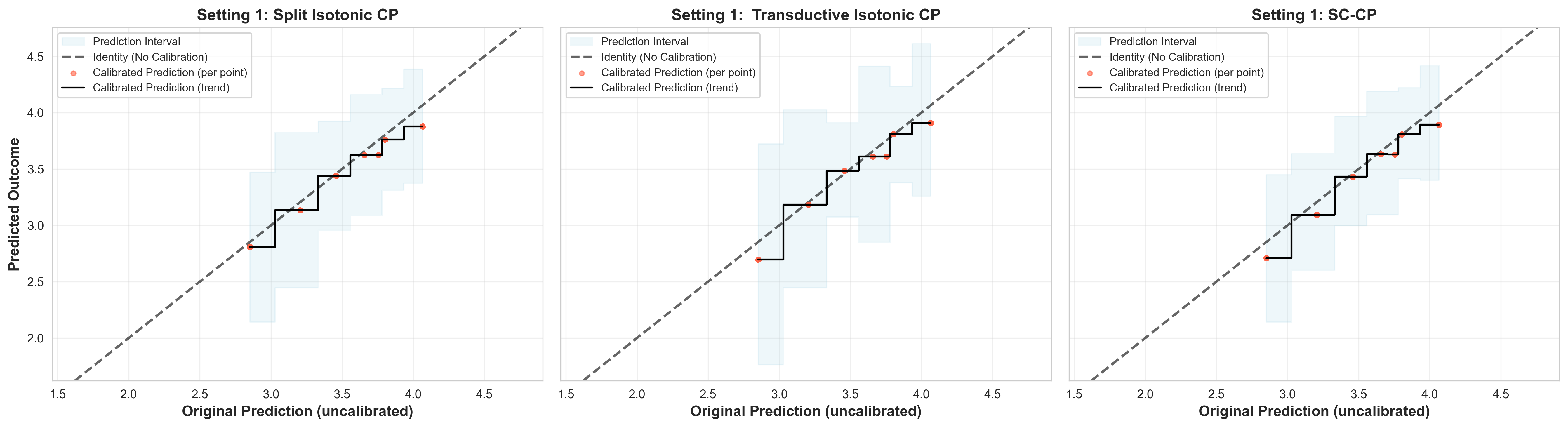}
        \caption{Setting 1: poorly calibrated base model.}
        \label{fig:concrete_setting1_calibration}
    \end{subfigure}

    \vspace{0.5em}

    \begin{subfigure}{0.98\textwidth}
        \centering
        \includegraphics[width=\textwidth]{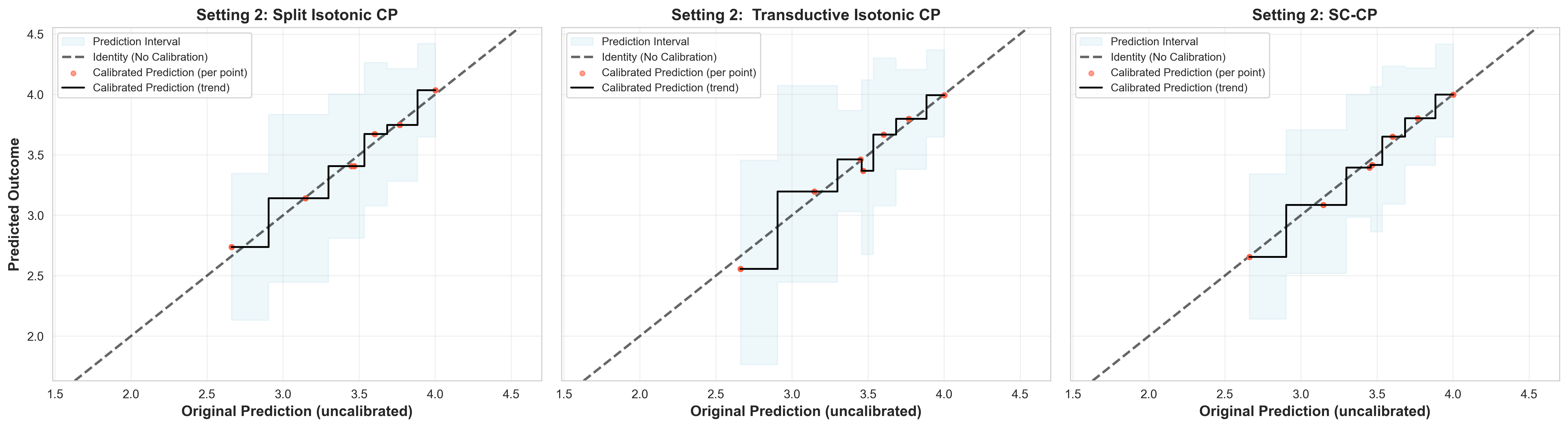}
        \caption{Setting 2: well calibrated base model.}
        \label{fig:concrete_setting2_calibration}
    \end{subfigure}

    \caption{
    Prediction intervals and calibrated predictions for the Concrete dataset under two calibration settings.
    The dashed gray line denotes the identity map, corresponding to no calibration.
    The black step function denotes the calibrated prediction, the red points denote pointwise calibrated predictions, and the shaded region denotes the conformal prediction interval.
    }
    \label{fig:concrete_calibration_plots}
\end{figure}
 
\subsection{Bio Dataset}

\begin{table}[H]
\centering
\caption{Results for Bio Dataset -- Setting 1 (Poorly-Calibrated). Coverage at nominal level $1-\alpha=0.9$.}
\label{tab:bio_s1}
\begin{tabular}{lcccccc}
\toprule
\textbf{Method} & \multicolumn{2}{c}{\textbf{Coverage}} & \multicolumn{2}{c}{\textbf{Avg Width}} & \multicolumn{2}{c}{\textbf{Cal Error}} \\
\cmidrule(lr){2-3} \cmidrule(lr){4-5} \cmidrule(lr){6-7}
 & $A=0$ & $A=1$ & $A=0$ & $A=1$ & $A=0$ & $A=1$ \\
\midrule
Marginal CP & 0.893 & 0.900 & 2.20 & 2.20 & - & - \\
Split Isotonic CP & 0.893 & 0.899 & 1.95 & 2.02 & 0.0007 & 0.0006 \\
Transductive Isotonic CP & 0.894 & 0.895 & 1.98 & 1.98 & 0.0000 & 0.0000 \\
SC-CP & 0.896 & 0.892 & 1.90 & 1.87 & 0.0000 & 0.0000 \\
\bottomrule
\end{tabular}
\end{table}

\begin{table}[H]
\centering
\caption{Results for Bio Dataset -- Setting 2 (Well-Calibrated). Coverage at nominal level $1-\alpha=0.9$.}
\label{tab:bio_s2}
\begin{tabular}{lcccccc}
\toprule
\textbf{Method} & \multicolumn{2}{c}{\textbf{Coverage}} & \multicolumn{2}{c}{\textbf{Avg Width}} & \multicolumn{2}{c}{\textbf{Cal Error}} \\
\cmidrule(lr){2-3} \cmidrule(lr){4-5} \cmidrule(lr){6-7}
 & $A=0$ & $A=1$ & $A=0$ & $A=1$ & $A=0$ & $A=1$ \\
\midrule
Marginal CP & 0.900 & 0.901 & 2.10 & 2.10 & - & - \\
Split Isotonic CP & 0.902 & 0.900 & 2.05 & 2.05 & 0.0021 & 0.0006 \\
Transductive Isotonic CP & 0.894 & 0.895 & 1.99 & 2.00 & 0.0000 & 0.0000 \\
SC-CP & 0.893 & 0.900 & 1.90 & 1.93 & 0.0003 & 0.0000 \\
\bottomrule
\end{tabular}
\end{table}

 \begin{figure}[H]
    \centering

    \begin{subfigure}{0.98\textwidth}
        \centering
        \includegraphics[width=\textwidth]{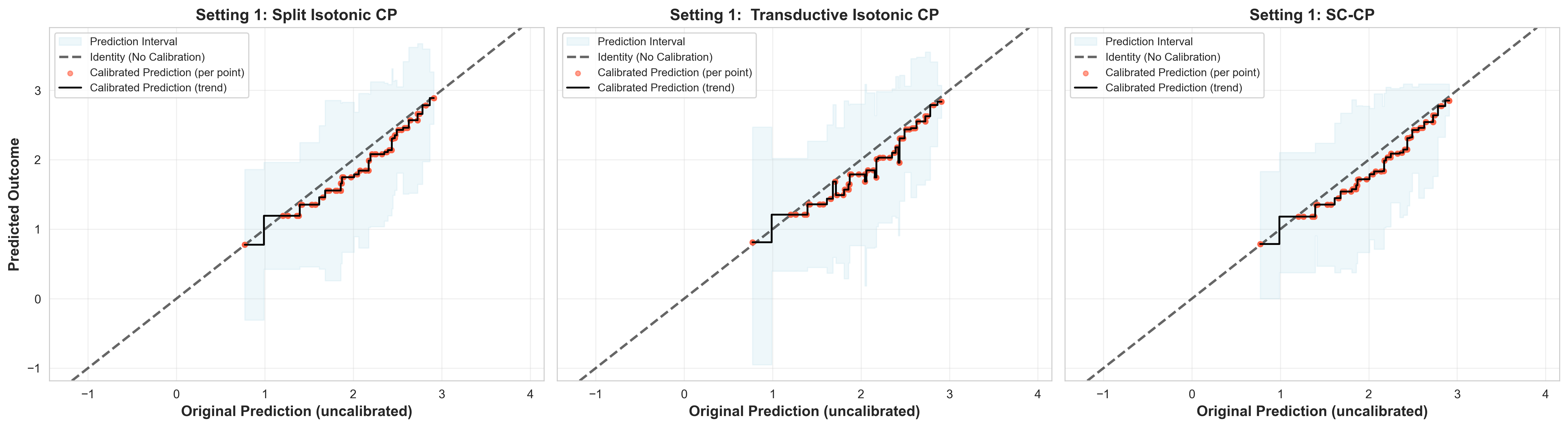}
        \caption{Setting 1: poorly calibrated base model.}
        \label{fig:bio_setting1_calibration}
    \end{subfigure}

    \vspace{0.5em}

    \begin{subfigure}{0.98\textwidth}
        \centering
        \includegraphics[width=\textwidth]{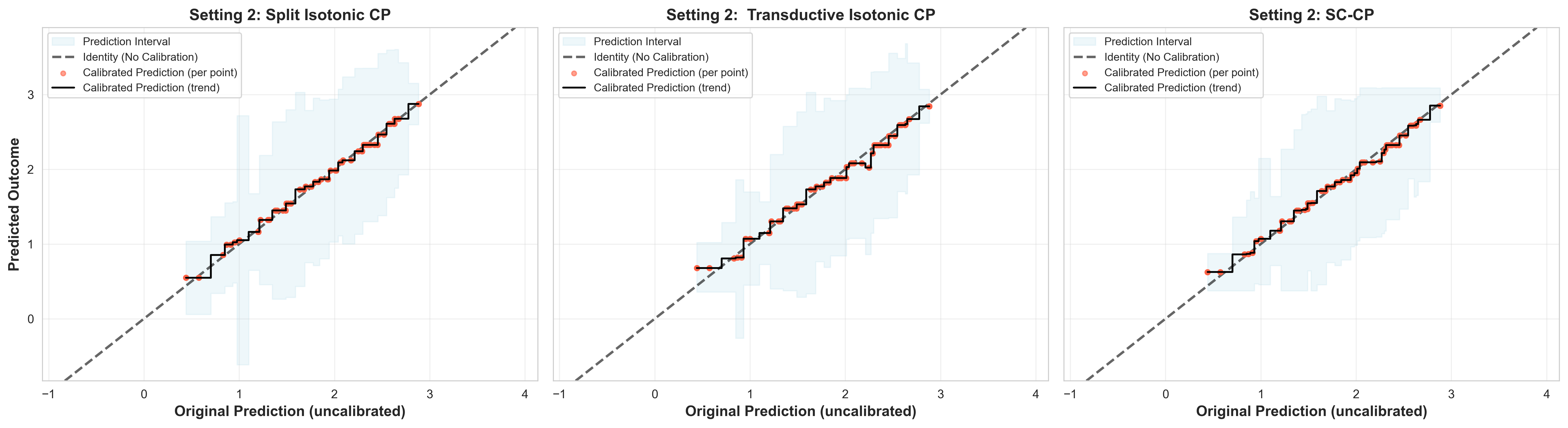}
        \caption{Setting 2: well calibrated base model.}
        \label{fig:bio_setting2_calibration}
    \end{subfigure}

    \caption{
    Prediction intervals and calibrated predictions for the Bio dataset under two calibration settings.
    The dashed gray line denotes the identity map, corresponding to no calibration.
    The black step function denotes the calibrated prediction, the red points denote pointwise calibrated predictions, and the shaded region denotes the conformal prediction interval.
    }
    \label{fig:bio_calibration_plots}
\end{figure}
\subsection{STAR Dataset}

\begin{table}[H]
\centering
\caption{Results for STAR Dataset - Setting 1 (Poorly-Calibrated)}
\begin{tabular}{lcccccc}
\toprule
\textbf{Method} & \multicolumn{2}{c}{\textbf{Coverage}} & \multicolumn{2}{c}{\textbf{Avg Width}} & \multicolumn{2}{c}{\textbf{Cal Error}} \\
\cmidrule(lr){2-3} \cmidrule(lr){4-5} \cmidrule(lr){6-7}
 & $A=0$ & $A=1$ & $A=0$ & $A=1$ & $A=0$ & $A=1$ \\
\midrule
Marginal CP & 0.894 & 0.897 & 0.17 & 0.17 & - & - \\
Split Isotonic CP & 0.917 & 0.912 & 0.18 & 0.19 & 0.0000 & 0.0000 \\
Transductive Isotonic CP & 0.909 & 0.896 & 0.19 & 0.18 & 0.0000 & 0.0000 \\
SC-CP & 0.928 & 0.896 & 0.18 & 0.18 & 0.0000 & 0.0000 \\
\bottomrule
\end{tabular}
\end{table}

\begin{table}[H]
\centering
\caption{Results for STAR Dataset - Setting 2 (Well-Calibrated)}
\begin{tabular}{lcccccc}
\toprule
\textbf{Method} & \multicolumn{2}{c}{\textbf{Coverage}} & \multicolumn{2}{c}{\textbf{Avg Width}} & \multicolumn{2}{c}{\textbf{Cal Error}} \\
\cmidrule(lr){2-3} \cmidrule(lr){4-5} \cmidrule(lr){6-7}
 & $A=0$ & $A=1$ & $A=0$ & $A=1$ & $A=0$ & $A=1$ \\
\midrule
Marginal CP & 0.890 & 0.905 & 0.17 & 0.17 & - & - \\
Split Isotonic CP & 0.917 & 0.912 & 0.18 & 0.19 & 0.0000 & 0.0000 \\
Transductive Isotonic CP & 0.909 & 0.896 & 0.19 & 0.18 & 0.0000 & 0.0000 \\
SC-CP & 0.928 & 0.896 & 0.18 & 0.18 & 0.0000 & 0.0000 \\
\bottomrule
\end{tabular}
\end{table}

\begin{figure}[H]
    \centering

    \begin{subfigure}{0.98\textwidth}
        \centering
        \includegraphics[width=\textwidth]{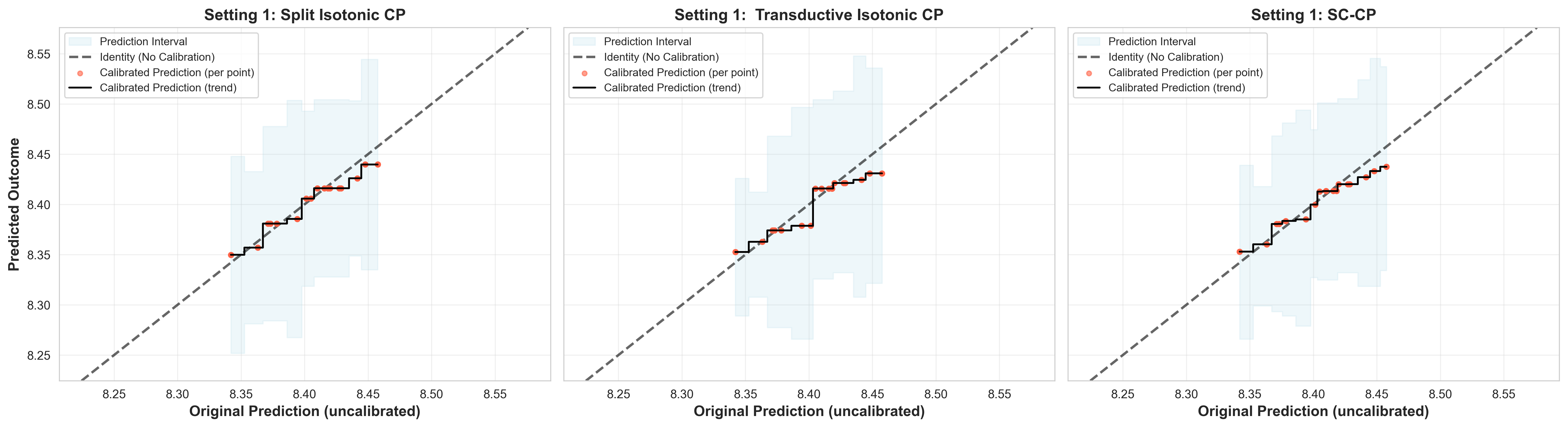}
        \caption{Setting 1: poorly calibrated base model.}
        \label{fig:star_setting1_calibration}
    \end{subfigure}

    \vspace{0.5em}

    \begin{subfigure}{0.98\textwidth}
        \centering
        \includegraphics[width=\textwidth]{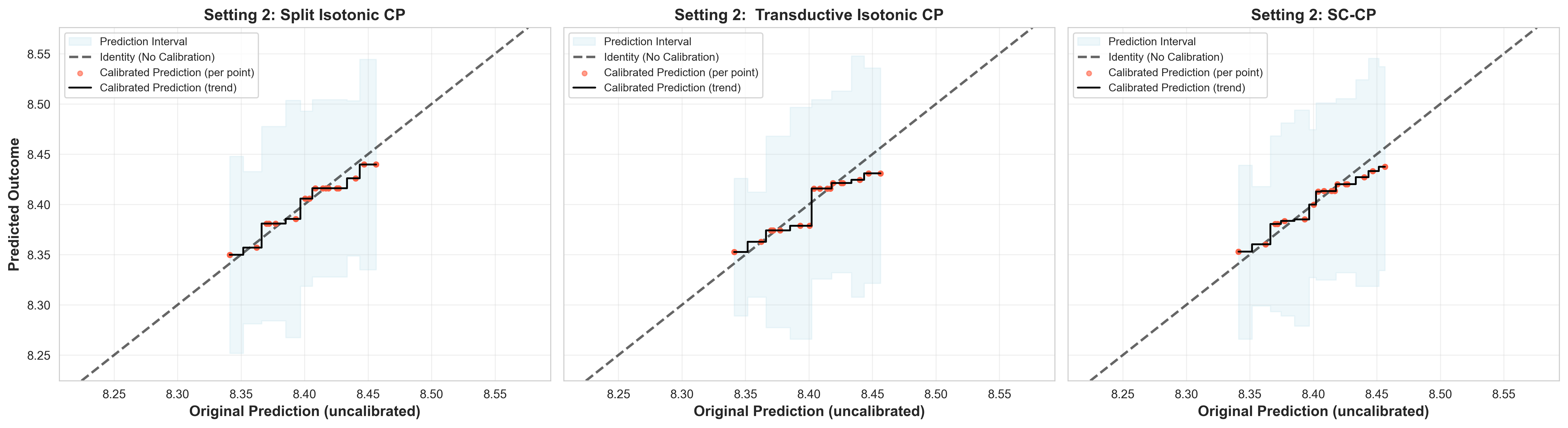}
        \caption{Setting 2: well calibrated base model.}
        \label{fig:star_setting2_calibration}
    \end{subfigure}

    \caption{
    Prediction intervals and calibrated predictions for the STAR dataset under two calibration settings.
    The dashed gray line denotes the identity map, corresponding to no calibration.
    The black step function denotes the calibrated prediction, the red points denote pointwise calibrated predictions, and the shaded region denotes the conformal prediction interval.
    }
    \label{fig:star_calibration_plots}
\end{figure}

\end{document}